\documentclass{article} 
\usepackage{iclr2027_conference,times}

\usepackage{amsmath,amsfonts,bm}

\def\eqref#1{equation~\ref{#1}}

\def\1{\bm{1}}

\DeclareMathAlphabet{\mathsfit}{\encodingdefault}{\sfdefault}{m}{sl}
\SetMathAlphabet{\mathsfit}{bold}{\encodingdefault}{\sfdefault}{bx}{n}

\newcommand{\E}{\mathbb{E}}

\newcommand{\R}{\mathbb{R}}

\usepackage{dcolumn}

\usepackage{hyperref}
\usepackage{url}

\usepackage{amsmath,amssymb,amsfonts,amsthm,mathtools,bm}

\usepackage{graphicx}
\usepackage{xcolor}

\usepackage{booktabs}
\usepackage{tabularx}
\usepackage{array}
\usepackage{multirow}
\usepackage{makecell}

\usepackage{float}
\usepackage{subcaption}

\definecolor{gain}{HTML}{138A45}
\newcolumntype{G}{>{\color{gain}\scriptsize}l}

\newcommand{\RowNorm}{\mathcal{R}}
\newcommand{\Polar}{\mathcal{P}}
\newcommand{\Orient}{\mathcal{O}}
\newcommand{\NSfive}{\operatorname{NS5}}

\newcommand{\inner}[2]{\left\langle #1,#2\right\rangle}
\newcommand{\Frob}[1]{\left\lVert #1\right\rVert_{\mathrm F}}
\newcommand{\spec}[1]{\left\lVert #1\right\rVert_2}
\newcommand{\rowmax}[1]{\left\lVert #1\right\rVert_{2,\infty}}

\newtheorem{theorem}{Theorem}
\newtheorem{lemma}{Lemma}
\newtheorem{proposition}{Proposition}

\floatstyle{ruled}
\newfloat{algorithm}{tbp}{loa}
\floatname{algorithm}{Algorithm}

\newcounter{algline}
\newcommand{\ResetAlgorithmLines}{\setcounter{algline}{0}}

\newcommand{\AlgLine}[2][0]{%
  \refstepcounter{algline}%
  \par\noindent%
  \hangindent=\dimexpr 2.35em + #1em\relax%
  \hangafter=1%
  \makebox[1.8em][r]{\scriptsize\thealgline}%
  \hspace{0.55em}\hspace*{#1em}#2\par%
}

\title{Scaling Muon for Diffusion Transformers}

\author{%
\begin{tabular}{@{}c@{}}
{\small\bfseries
Chenghao Li$^{1,2}$\thanks{Work done during an internship at Meta.},\ 
Xiao Han$^{2}$,\ 
Xinxin Huang$^{2}$,\ 
Wei Liu$^{2}$,\ 
Boyang Li$^{2}$,\ 
Bing Xiao$^{2}$,}
\\[-1pt]
{\small\bfseries
Heran Zhang$^{2}$,\ 
Juanma Perez Rua$^{2}$,\ 
Ke Xu$^{2}$,\ 
Kangning Liu$^{2}$,\ 
Linjun Kuang$^{2}$,\ 
Na Li$^{2}$,}
\\[-1pt]
{\small\bfseries
Tan Wang$^{2}$,\ 
Tian Xie$^{2}$,\ 
Wei Peng$^{2}$,\ 
Yang Pei$^{2}$,\ 
Yifan Xu$^{2}$,\ 
Yuanhao Zhai$^{2}$,}
\\[-1pt]
{\small\bfseries
Yuwei Lin$^{2}$,\ 
Zhe Wang$^{2}$,\ 
Zihao He$^{2}$,\ 
Daniel Li$^{2}$,\ 
Junbiao Tang$^{2}$,\ 
Ziyang Jiang$^{2}$,\ 
Dake Chen$^{2}$}
\\[3pt]
{\small\normalfont
$^{1}$University of Southern California
\qquad
$^{2}$Meta}
\end{tabular}%
}

\iclrfinalcopy 
\begin{document}

\maketitle

\fancyhead{}
\renewcommand{\headrulewidth}{0pt}

\begin{abstract}
The matrix-aware optimizer Muon improves large model training by balancing updates across singular directions, yet its scaling behavior and end-to-end efficiency on large Diffusion Transformers (DiTs) remain unclear. We first establish Muon's scaling behavior on DiTs from 1.3B to 15B parameters, showing that its optimization and generative quality advantages over AdamW persist across model scales. However, at scale, the 5-step Newton--Schulz iteration (NS5) performed at every optimization step, together with full-momentum materialization, introduces substantial computation and communication overhead that can offset Muon's step-efficiency advantage. We introduce \emph{Periodic Row-wise Muon}, which performs a full NS5 spectral update once every \(K\) steps and applies a low compute and communication cost row-wise constrained update based on the current momentum at the remaining steps. We further co-design a distributed implementation that operates directly on sharded momentum during non-refresh steps and accelerates spectral refreshes through bucketed all-gather and communication--computation overlap. Across all scales, Muon improves the best observed generative quality over AdamW by 12.9--19.1\%. Compared with vanilla Muon, Periodic Row-wise Muon remains within 0.5\% in best generative quality on the 1.3B--4B models and improves it by 4.5\% at 9B. It reduces optimizer time by 46.9--54.3\%, end-to-end step time by 15.7--24.3\%, and logical communication volume by 66.7\%, while reaching its respective best generative quality with 33.7--64.8\% less active training time. These results show that Periodic Row-wise Muon preserves Muon's generative quality advantage while translating it into end-to-end training efficiency for large DiTs.

\end{abstract}

\section{Introduction}
\label{section:intro}

The continued scaling of Diffusion Transformers (DiTs)~\citep{peebles2023scalable,esser2024scaling,chen2024pixart} has improved model capacity and generative quality, while making training efficiency a central concern. Prior work has characterized predictable improvements with increased training compute~\citep{yin2025towards,liang2026scaling}, 
but has typically treated the optimizer as fixed.
At billion-parameter scale, however, optimizer efficiency can no longer be judged solely by the number of steps required to reach a lower loss. Fewer optimization steps 
should
translate into 
lower GPU-hours and shorter wall-clock time under distributed training. 
Evaluating optimizer scalability therefore requires considering both its optimization advantage and its realized compute and communication cost under distributed training.

Muon~\citep{jordan2024muon} presents a promising alternative for improving large-scale training efficiency. Unlike AdamW~\citep{loshchilov2017decoupled}, which applies coordinate-wise adaptive updates, Muon treats two-dimensional weight tensors as matrices and applies a finite-step Newton--Schulz (NS) transformation to their momentum, producing update directions with global spectral structure. 
Recent work has shown that, when properly calibrated, Muon can match AdamW with substantially fewer training FLOPs in large language model pretraining and post-training~\citep{liu2025muon}. 
Evidence from smaller diffusion models likewise suggests improved optimization despite a higher per-step compute and communication cost, while showing that loss, generative quality, and runtime may rank optimizers differently~\citep{schaipp2025optimization}. It therefore remains unclear whether Muon's optimization advantage persists as DiTs scale to ten-billion parameters and whether its 
compute and communication cost
can be reduced sufficiently to improve end-to-end training efficiency.

Answering these questions requires considering optimization and compute and communication costs jointly. Under the sharded execution path studied in this work, Muon's NS5 transformation introduces additional matrix multiplications, full-momentum communication, synchronization, and temporary materialization~\citep{liu2025muon}. 
These considerations motivate reducing the frequency of full spectral transformations and the associated exposed compute and communication, while preserving Muon's optimization advantage.

We first establish that Muon's optimization advantage persists at scale by training DiTs from 1.3B to 15B parameters on the GPIC dataset~\citep{chandrasegaran2026gpic}. Across all four scales, Muon achieves lower validation loss and improves the best observed FD-DINO over AdamW by 12.9--19.1\%. 
However, vanilla Muon requires substantially more computation and communication per step.
These results demonstrate Muon's algorithmic scalability on large DiTs while exposing a systems bottleneck to realizing its optimization advantage efficiently.

To address this bottleneck, we revisit whether global spectral geometry must be imposed at every optimization step and introduce \emph{Periodic Row-wise Muon}, 
which performs a full NS5 update once every \(K\) steps, and applies a row-wise constrained normalization operator (RowNorm)~\citep{lau2026symmetry} 
at the remaining steps.
The method alternates between two complementary matrix geometries. Periodic spectral (refresh) steps provide global coupling across rows and singular directions, whereas low compute and communication cost row-wise constrained (non-refresh) steps provide local scale control and maintain stable optimization behavior between consecutive spectral refreshes.


We co-design the distributed execution with the periodic update. On non-refresh steps, RowNorm operates directly on sharded momentum and for matrix shapes whose normalization rows span ranks, only their norm statistics are all-reduced instead of the full matrix. On refresh steps, we pipeline bucketed momentum all-gathers for upcoming matrices with NS computation on already available buckets to reduce exposed communication latency. This design removes full-momentum materialization on non-refresh steps and reduces the exposed compute and communication of the remaining spectral refreshes.



Across the 1.3B--4B models, Periodic Row-wise Muon's best observed FD-DINO remains within 0.5\% of vanilla Muon, while outperforming it by approximately 4.5\% at 9B and
2.7\% at 15B. Relative to vanilla Muon, it reduces optimizer time by 46.9--54.3\%, end-to-end step time by 15.7--24.3\%, and logical optimizer-communication volume by 66.7\% across all scales.

Our main contributions are as follows: \textbf{(1)} we characterize Muon on 1.3B--15B DiTs, showing that its advantages over AdamW in validation loss and generative quality persist across scale while identifying NS5 computation and full-momentum communication as its main systems bottlenecks; \textbf{(2)} we introduce Periodic Row-wise Muon, which replaces most NS5 transformations with low compute and communication cost RowNorm updates from the current momentum; and \textbf{(3)} we develop a distributed implementation that operates on sharded momentum during non-refresh steps and pipelines bucketed all-gather with NS5 computation during refresh steps, substantially reducing systems overhead while retaining generative quality comparable to vanilla Muon.

\section{Muon for DiTs}
\label{section:background}


\subsection{Scaling Muon for DiTs Training}
\label{section:background_optimizers}

Given a clean data sample \(x\), a diffusion timestep \(a\), and noise \(\epsilon\), let \(x_a\) denote the noisy input constructed according to the prescribed noise schedule. We train a DiT to predict the corresponding velocity target \(v^\star\) by minimizing
\begin{equation}
\label{eq:dit_objective}
\mathcal{L}(\theta)
=
\mathbb{E}_{x,a,\epsilon}
\left[
\left\lVert
v_\theta(x_a,a)-v^\star(x_a,a)
\right\rVert_2^2
\right].
\end{equation}
We omit additional conditioning variables, such as text embeddings, for notational simplicity. 
The AdamW baseline applies AdamW to all trainable parameters. Muon instead applies matrix-aware updates to two-dimensional hidden weight matrices, while biases, normalization parameters, and other non-matrix parameters continue to be updated by AdamW.

Consider a weight matrix at training step \(t\) denotes \(W_t\in\mathbb{R}^{m\times n}\) with gradient \(G_t=\nabla_{W_t}\mathcal{L}(\theta_t)\). 
Muon applies a finite spectral transformation consisting of \(J_{\mathrm{NS}}=5\)  Newton--Schulz iterations \(\Phi_5\) to its momentum \(M_t\), and decoupled weight decay update:
\begin{equation}
\label{eq:finite_ns5}
\Polar_t
=
\Phi_5(M_t),
\qquad
W_{t+1}
=
(1-\eta_t\lambda)W_t
-
\eta_t\,s(W_t)\Polar_t,
\end{equation}
where \(\eta_t\) is learning rate, \(\lambda\) is weight decay coefficient, and \(s(W_t)\) is shape-dependent update scale.

The finite map \(\Phi_5\) is inspired by the matrix polar factor and reshapes the singular values of the momentum to produce a globally coupled spectral direction \(\Polar_t\). Importantly, NS5 is not equivalent to computing an exact SVD polar decomposition, nor does it generally yield an exactly orthogonal matrix after \(5\) iterations. Our algorithmic definitions, theoretical analysis, and experiments therefore use the implemented finite map \(\Phi_5\), rather than assuming convergence to the exact polar factor.

We characterize Muon's scalability for large scale DiTs training along two dimensions: algorithmic and systems scalability. Algorithmic scalability answers whether Muon maintains stable optimization behavior as model size increases and whether its advantages over AdamW in validation loss and generative quality persist across model scales. Systems scalability answers whether these algorithmic gains translate into realized efficiency under large scale distributed training.

\subsection{Computation Complexity}
\label{section:background_compute}

The most direct computational difference between Muon and AdamW arises from updating two-dimensional matrices. Consider
\(W\in\mathbb{R}^{m\times n}\),
\(r=\min(m,n)\),
\(c=\max(m,n)\).
Because AdamW maintains and applies element-wise statistics, its arithmetic complexity for this matrix is
\(C_{\mathrm{AdamW}}(m,n)
=
\Theta(mn)
=
\Theta(rc)\).
In contrast, a finite step Newton--Schulz transformation repeatedly performs matrix multiplications. Even choosing the smaller Gram-matrix orientation yields
\(C_{\mathrm{NS}}(m,n)
=
\Theta\!\left(
J_{\mathrm{NS}}r^2c
\right)\).
Thus, NS5 incurs an additional factor of
\(\Theta(J_{\mathrm{NS}}r)\) in arithmetic per matrix.
Ignoring implementation constants and hardware throughput, their arithmetic 
complexity
ratio for one matrix is \(\Theta(J_{\mathrm{NS}}r)\).

\subsection{
Large Scale Distributed Training}
\label{section}

Local arithmetic alone does not characterize Muon's compute and communication costs at scale,
which also depends on how its full-matrix spectral transformation is
mapped onto sharded optimizer states. In our execution path, each
two-dimensional parameter and its momentum are sharded along dimension
zero, so each rank stores a subset of rows and the corresponding
momentum shard. AdamW is coordinate-separable because each rank can update its
local parameters using only its local first- and second-moment shards,
without materializing the full matrix. Muon is not shard-separable under
this execution path because Newton--Schulz iterations couple rows and
singular directions through Gram matrices and matrix multiplications.
In general, a rank cannot recover its portion of the spectral update
from its local momentum shard alone. Our vanilla implementation
therefore all-gathers the momentum shards to every rank, executes NS5
on the full matrix at every rank, retains the output slice corresponding
to the local parameter shard, and then discards the remaining output
and releases the temporary full-matrix buffers~\citep{liu2025muon}.

Muon's systems
requirements
consequently include the GEMMs and associated
matrix operations of replicated NS5 execution, together with
full-momentum communication and temporary full-matrix materialization. Suppose the sharding group contains \(p\)
ranks and each momentum element occupies \(b\) bytes. Ignoring padding
and protocol overhead, all-gathering an \(m\times n\) momentum matrix
requires each rank to logically receive approximately
\begin{equation}
\label{eq}
V_{\mathrm{AG}}(m,n)
\approx
bmn\frac{p-1}{p}
\ \text{bytes}.
\end{equation}
This quantity excludes parameter communication already required by
forward and backward propagation. Logical communication volume alone,
however, does not determine communication time, which also depends on
the collective count, bucket sizes, launch latency, the fraction of
inter-node traffic, network topology and contention, and the extent of
communication--computation overlap. In particular, many small buckets
may incur substantial launch latency without changing the total
communication volume.

Together, repeated spectral computation and full-momentum communication substantially increase Muon's per-step resource requirements and may offset its optimization advantage.
This motivates reducing the frequency of full
spectral transformations while retaining effective updates.

\section{Periodic Row-wise Muon}
\label{sec:periodic-muon}

We introduce \emph{Periodic Row-wise Muon}, which performs an NS5 spectral
update periodically and applies RowNorm to the current momentum at the
remaining steps. Rather than treating RowNorm as a numerical
approximation to NS5, our method alternates between two matrix updates
induced by distinct constraint geometries. We then exploit the locality
of RowNorm to reduce momentum collective communication and full-matrix
materialization.

\subsection{Complementary Spectral and Row-wise Geometries}
\label{sec:complementary-geometries}

For a two-dimensional momentum \(M_t\in\mathbb{R}^{m\times n}\), let
\(\widetilde M_t=\Orient(M_t)\) denote \(M_t\) if \(m\leq n\) and
\(M_t^\top\) otherwise, so that
\(\widetilde M_t\in\mathbb{R}^{r\times c}\), where
\(r=\min(m,n)\leq c=\max(m,n)\). The resulting direction is mapped
back through \(\Orient^{-1}\).

For a full-row-rank matrix \(X\in\mathbb{R}^{r\times c}\), the ideal
polar factor $\Polar(X)$ is
\begin{equation}
\Polar(X)=(XX^\top)^{-1/2}X
\in
\operatorname*{arg\,max}_{\spec{U}\leq1}
\inner{X}{U},
\qquad
\inner{A}{B}=\operatorname{tr}(A^\top B).
\label{eq:polar-geometry}
\end{equation}
The spectral-norm constraint globally couples the rows and singular
directions of \(X\). Practical Muon uses a finite NS5 map motivated by this spectral
geometry rather than computing the exact polar factor.

RowNorm, previously studied in symmetry-compatible optimizer
design~\citep{lau2026symmetry}, instead normalizes each row
independently,
\begin{equation}
\RowNorm_\epsilon(X)_{i:}
=
\frac{X_{i:}}
{\max\!\left(\lVert X_{i:}\rVert_2,\epsilon\right)},
\qquad i=1,\ldots,r.
\label{eq:rownorm}
\end{equation}
For nonzero rows, RowNorm maximizes
\(\inner{X}{U}\) subject to
\(\rowmax{U}:=\max_i\lVert U_{i:}\rVert_2\leq1\).
Thus, the ideal polar direction imposes a globally coupled spectral
constraint, whereas RowNorm preserves each momentum row's direction
while controlling its magnitude independently. The complete variational characterizations and their relation are
provided in Appendices~\ref{app:rownorm-variational}
and~\ref{app:distinct-geometries}.

Periodic spectral correction is further motivated by the local
stability of the ideal polar direction. For full-row-rank matrices
\(A\) and \(B\),
\begin{equation}
\Frob{\Polar(A)-\Polar(B)}
\leq
\frac{2\Frob{A-B}}
{\sigma_{\min}(A)+\sigma_{\min}(B)}.
\label{eq:polar-perturbation}
\end{equation}
Hence, moderate momentum changes imply moderate changes in the ideal
polar direction as long as the matrices remain away from rank
degeneracy. This motivates periodically reimposing global spectral
structure and using a lower compute and communication cost structured direction between
refreshes. The proof and limitations of this exact-polar argument are
discussed in Appendix~\ref{app:polar-stability}.

\subsection{Periodic Row-wise Muon}
\label{sec:periodic-update}

Building on the preceding analysis, we propose
\emph{Periodic Row-wise Muon}. Given a period \(K\geq 1\) and a RowNorm
multiplier \(\gamma>0\), define the refresh indicator
\(\rho_t=\mathbb{I}\!\left[t\bmod K=0\right]\).
At every step, we use the current momentum \(M_t\) to compute the parameter update
\begin{equation}
\widetilde D_t=
\begin{cases}
\NSfive(\widetilde M_t), & \rho_t=1,\\[3pt]
\gamma\,\RowNorm_\epsilon(\widetilde M_t), & \rho_t=0,
\end{cases}
\qquad
W_{t+1}=(1-\eta_t\lambda)W_t-\eta_ts(W_t)\Orient^{-1}(\widetilde D_t).
\label{eq:periodic-direction}
\end{equation}

The period \(K\) controls the frequency of spectral updates, whereas
\(\gamma\) calibrates the effective step size of the RowNorm branch
relative to the spectral branch. This calibration is necessary because
the two normalization maps correspond to different feasible sets. 
For a full-row-rank matrix with nonzero rows, the exact polar factor
and the unregularized RowNorm direction both have Frobenius norm
\(\sqrt r\). However, equal Frobenius norms do not imply equal stable
update amplitudes.
NS5 couples all rows under
spectral-norm geometry, whereas RowNorm updates rows independently under
\(\ell_{2,\infty}\) geometry. Their alignment with the gradient and the
local curvature encountered along their directions can therefore
differ. Directly reusing the Muon learning rate is equivalent to setting
\(\gamma=1\), which implicitly assumes that the two branches have the
same stable step-size range without theoretical justification.

We write the effective learning rate of an off-refresh step as
\(\eta_t^{\mathrm{RN}}=\gamma\eta_t\).
Here, \(\eta_t\) retains the global learning rate schedule of vanilla muon, while
\(\gamma\) only specifies the relative scale between the two update
geometries. Both normalization maps remove the global scale of the input
momentum, and \(s(W_t)\) already accounts for shape-dependent scaling.
The remaining calibration is therefore a dimensionless, branch-specific
quantity. We use a fixed \(\gamma\) to preserve a constant ratio between
the RowNorm and Muon learning rate schedules, rather than introducing a
separate time varying schedule for RowNorm. Complete pseudocode is provided in Appendix~\ref{app:full-optimizer}.


Under standard conditional-alignment and bounded-second-moment
assumptions, Appendix~\ref{app:conditional-descent} establishes a
finite-horizon descent bound for the periodic update. Appendix~\ref{app:blockwise-gamma} further gives the
blockwise dependence on \(\gamma\), including its interaction with the
remaining optimizer direction.

\subsection{Distributed Execution}
\label{sec:distributed-execution}


The two branches of Periodic Row-wise Muon have different communication
requirements. A refresh step must still materialize the complete
momentum matrix to execute NS5. To reduce the exposed latency of this global path, we introduce a
bucketed all-gather pipeline with
communication--computation overlap. Specifically, we partition the
Muon matrices into communication buckets. Once the momentum of the
current bucket has been all-gathered, we immediately reconstruct its
full matrices and execute NS5 while asynchronously all-gathering the
next bucket. Communication for the next bucket is thereby overlapped
with NS5 computation on the current bucket, and only a bounded
number of momentum matrices need to be materialized at any
time.

On non-refresh steps, RowNorm operates directly on the momentum shards.
If \(m\leq n\), the orientation is unchanged and every RowNorm row is
local, requiring no optimizer-specific collective. If \(m>n\), then
\(\Orient(M)=M^\top\), so each oriented row spans ranks. If rank \(p\)
owns the original rows indexed by \(\mathcal S_p\), it computes
\begin{equation}
q_j^{(p)}=\sum_{i\in\mathcal S_p}M_{ij}^2,
\qquad
\nu_j=
\max\left\{
\left(\sum_p q_j^{(p)}\right)^{1/2},\epsilon
\right\}.
\label{eq:sharded-tall-rownorm}
\end{equation}
The denominators therefore require a sum all-reduce of only \(n\)
scalars, rather than an all-gather of all \(mn\) momentum entries.
Appendix~\ref{app:sharded-rownorm-correctness} proves equivalence to dense RowNorm.

In the implementation, we first compute the local statistics for all
tall matrices, pack them, and launch bucketed asynchronous all-reduces.
While these collectives are in flight, we compute the local statistics
and RowNorm updates for the remaining matrices. Once the all-reduces
complete, we use the global statistics to finish the tall-matrix
updates. This ordering hides the statistics communication for tall
matrices behind local computation on the remaining matrices. Complete pseudocode is provide in Appendix~\ref{app:full-optimizer}

For an oriented matrix in \(\mathbb{R}^{r\times c}\), \(r\leq c\), the dominant computation
of NS5 is \(C_{\mathrm{NS5}}=\Theta(5r^2c)\), whereas RowNorm
requires only \(C_{\mathrm{RN}}=\Theta(rc)\). Let \(V_{\mathrm{AG}}\) and \(V_{\mathrm{RN}}\) denote
the total per-rank logical optimizer payloads on refresh and
non-refresh steps. The average matrix-processing computation over
one period \(\overline C(K)\) and period-averaged payload \(V(K)\) satisfy
\begin{equation}
\overline C(K)
=
\frac{1}{K}C_{\mathrm{NS5}}
+
\frac{K-1}{K}C_{\mathrm{RN}},
\qquad
\frac{V(K)}{V_{\mathrm{AG}}}
=
\frac{1}{K}
+
\frac{K-1}{K}
\frac{V_{\mathrm{RN}}}{V_{\mathrm{AG}}}.
\end{equation}
For complete-row matrices \(V_{\mathrm{RN}}=0\), while for a tall
matrix \(V_{\mathrm{RN}}/V_{\mathrm{AG}}=O(1/m)\) under the logical
payload model. For example, 
the total optimizer-specific payload for \(K=3\) approaches \(1/3\) of vanilla Muon when the norm-statistics payload is small.
Detailed arithmetic, correctness, communication, and overlap models
are given in Appendix~\ref{app:complexity-systems}.

\section{Experiments}
\label{sec:experiments}

We systematically compare AdamW, vanilla Muon, and Periodic Row-wise Muon across DiT models ranging from approximately 1.3B to 15B parameters.
Beyond the main comparison, we include all experimental results in Appendix~\ref{app:additional_experiments}.

\subsection{Experimental Setup}
\label{sec:experimental_setup}

\textbf{Dataset and models.}
We train text-to-image Diffusion Transformers from scratch on GPIC-Full~\citep{chandrasegaran2026gpic}. GPIC contains 100M training image--text pairs collected from Flickr and Wikimedia, 
captioned with Qwen3-VL-4B-Instruct~\citep{bai2025qwen3}. We train four MMDiT~\citep{peebles2023scalable} configurations at $512\times512$ resolution, containing approximately 1.3B, 4B, 9B, and 15B parameters. All models share the same overall architecture and conditioning modules, and differ only in hidden dimension, depth, and number of attention heads.

\textbf{Optimization and training.}
Unless otherwise specified, Periodic Row-wise Muon uses \(K=3\) and \(\gamma=0.15\) throughout and their selection and analysis are provided in Appendix~\ref{app:k_gamma_selection}. At each model scale, all three optimizers are trained for 60{,}000 steps with the same global batch size of 4{,}096, and therefore process the same number of training examples. All experiments use 32 nodes with 256 NVIDIA H100 GPUs in total and FSDP2 in Pytorch.
We normalize all time quantities by the
mean step time of the 1.3B AdamW run, which is defined as one unit.

\textbf{Evaluation.}
We select a fixed set of 50{,}000 prompts from the GPIC test set and generate one $512\times512$ image per prompt. All methods use the same sampling configuration with a fixed classifier-free guidance scale of 5.0. We report FD-DINOv2~\citep{stein2023exposing}, FID~\citep{heusel2017gans}, Maximum Mean Discrepancy (MMD), Precision, Recall, Density, Coverage~\citep{kynkaanniemi2019improved,sajjadi2018assessing,naeem2020reliable}, HPSv2.1~\citep{wu2023human}, and GenEval2~\citep{kamath2025geneval}. 
Completed experimental settings are provided in Appendix~\ref{app:experimental_setup}.

\begin{figure*}[t]
    \centering

    \begin{subfigure}[t]{\textwidth}
        \centering
        \includegraphics[
            width=\linewidth,
            trim={0 8pt 0 13pt},
            clip
        ]{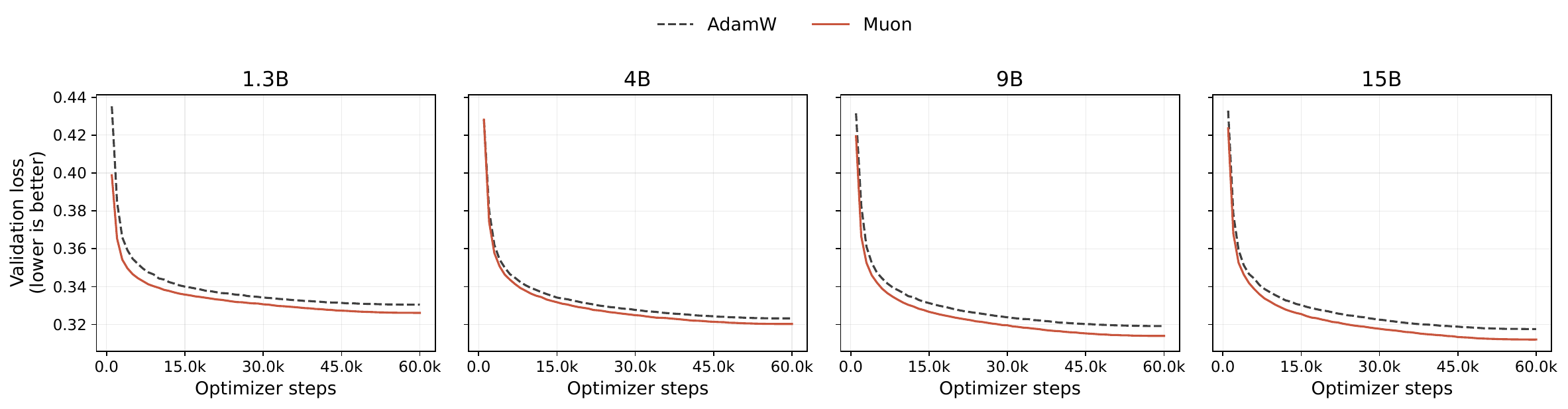}
        \caption{Validation loss versus training progress.}
        \label{fig:val_loss_progress}
    \end{subfigure}

    \vspace{0.0001em}

    \begin{subfigure}[t]{\textwidth}
        \centering
        \includegraphics[
            width=0.7\linewidth,
            trim={0 8pt 0 7pt},
            clip
        ]{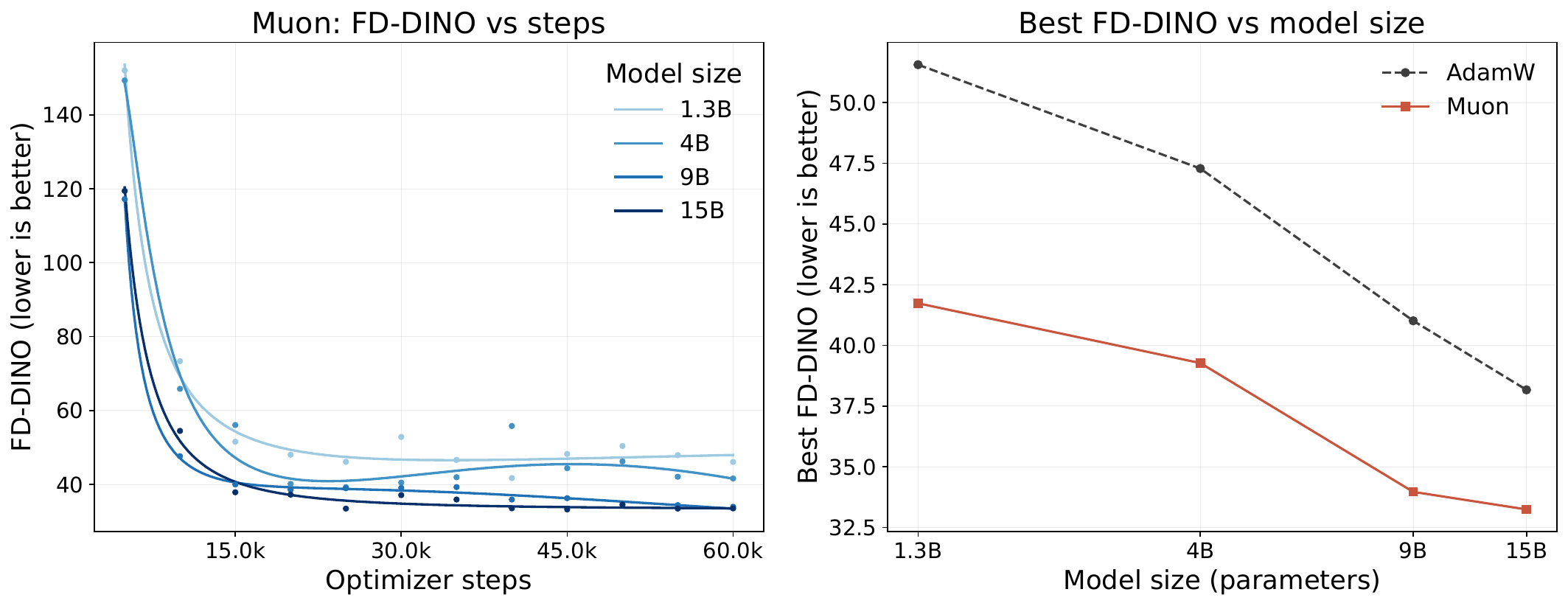}
        \caption{
            FD-DINO throughout training and the best observed FD-DINO
            across model scales.
        }
        \label{fig:fd_dino_scaling}
    \end{subfigure}

    \vspace{0.0001em}

    \begin{subfigure}[t]{\textwidth}
        \centering
        \includegraphics[
            width=\linewidth,
            trim={0 8pt 0 57pt},
            clip
        ]{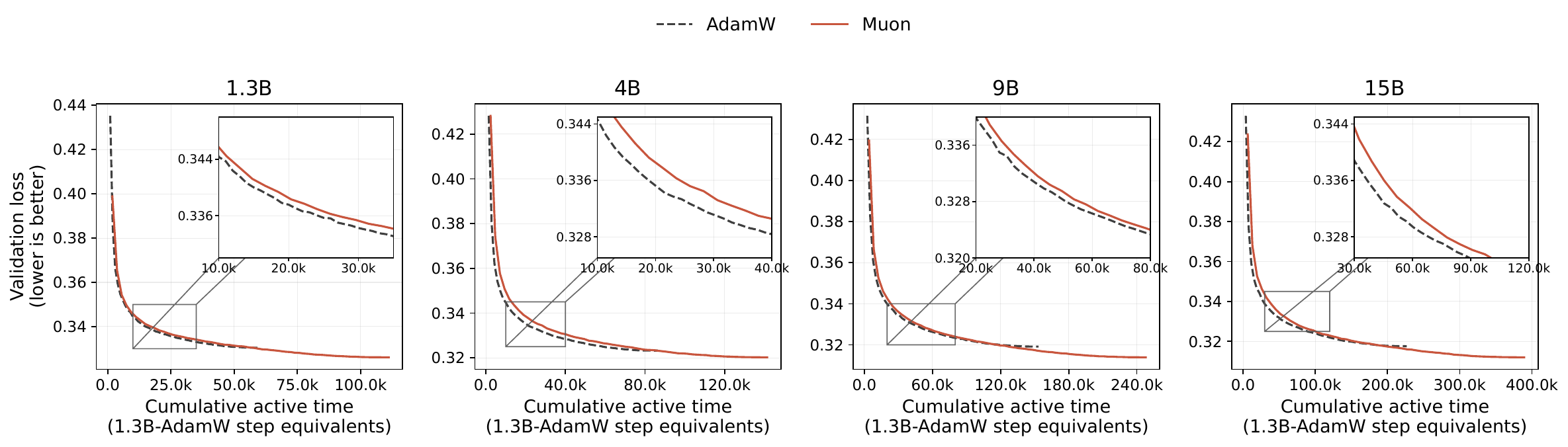}
        \caption{Validation loss versus wall-clock time.}
        \label{fig:val_loss_time}
    \end{subfigure}

    \caption{
        Scaling behavior of Muon for DiTs training.
        Muon consistently improves step efficiency and generation quality, while its higher per-step overhead can offset this advantage in wall-clock time.
    }
    \label{fig:muon_scaling}
\end{figure*}

\subsection{Scaling Muon for Diffusion Transformers}
\label{sec:muon_scaling_results}

We first examine whether Muon retains its optimization advantage as
Diffusion Transformers scale from 1.3B to 15B parameters.
Figure~\ref{fig:muon_scaling}(a) compares the validation loss trajectories
of AdamW and Muon over 60k optimization steps.
Across all four model scales, Muon consistently achieves lower validation
loss than AdamW throughout the main training regime.
The achievable validation loss also decreases as model size increases,
showing that Muon's optimization advantage remains stable when scaling
to substantially larger DiTs.


This advantage also extends to generation quality. Figure~\ref{fig:muon_scaling}(b) shows that, across the 1.3B--15B models, Muon improves the best observed FD-DINO over AdamW by 12.9--19.1\%, with the relative ordering across model scales remaining largely consistent throughout training. Together with the validation loss results, this demonstrates that Muon's optimization advantage persists across scale in both optimization progress and generation quality. Complete checkpoint evaluations are provided in Appendix~\ref{app:checkpoint_evals}.

The step-wise advantage of Muon, however, does not directly translate into
wall-clock time efficiency.
Figure~\ref{fig:muon_scaling}(c) replots the validation loss trajectories
against normalized active training time.
Although Muon ultimately reaches a lower validation loss, its larger
per-step compute and communication cost shifts its trajectory to the right in wall-clock space.
Within the highlighted training regime, AdamW reaches the same intermediate
loss levels earlier than Muon across all 4 model scales.
Thus, the optimization advantage observed per training step is partially
or fully offset by Muon's optimizer overhead under a fixed wall-clock time
budget.
This empirical gap motivates reducing Muon's per-step overhead while preserving
its generation quality advantage.

\begin{figure*}[t]
    \centering
    \includegraphics[
        width=0.68\textwidth,
        trim={0 33pt 0 46pt},
        clip
    ]{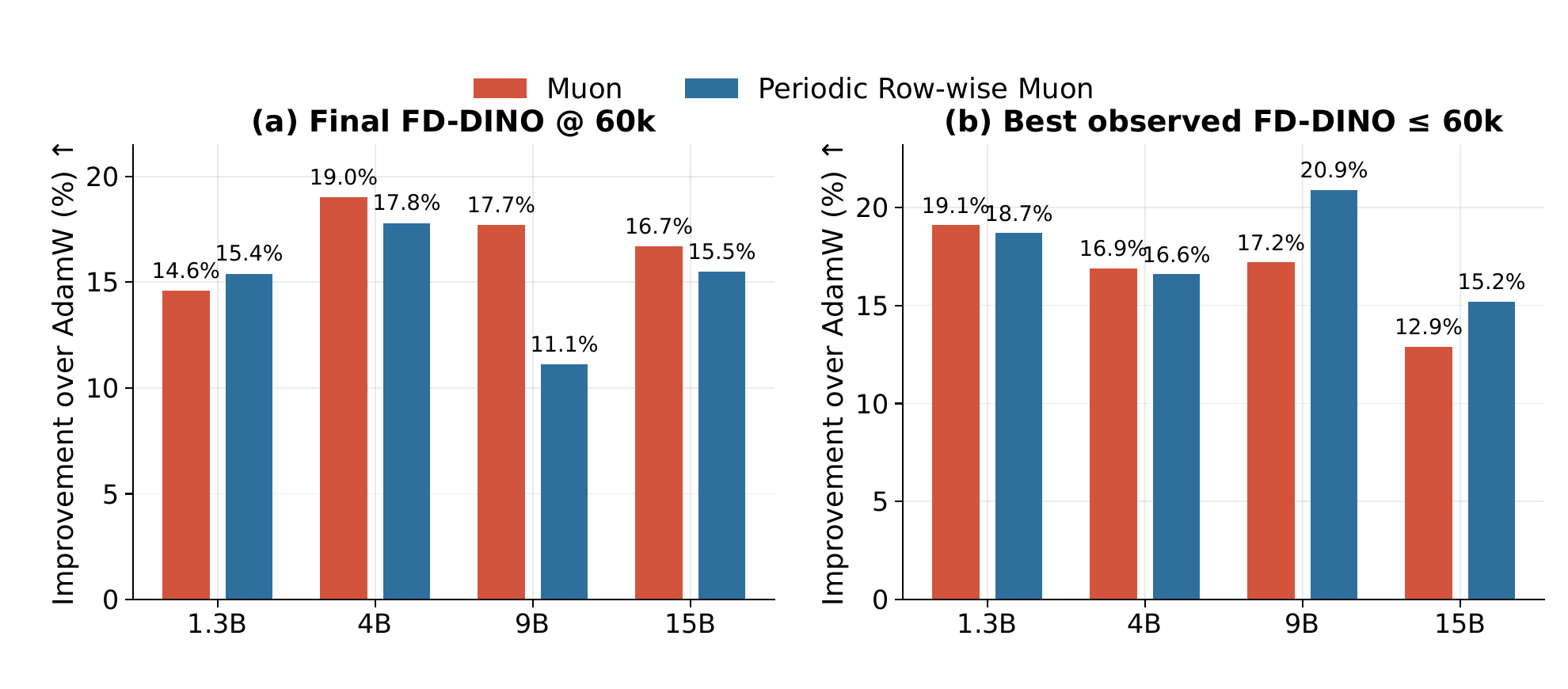}
    \caption{
        Generative quality across model scales.
        We report FD-DINO improvement over AdamW at both the final checkpoint and the best
        checkpoint observed during training.
    }
    \label{fig:fd_summary}
\end{figure*}

\begin{figure*}[t]
    \centering
    \includegraphics[
        width=\textwidth,
        trim={0 8pt 0 13pt},
        clip
    ]{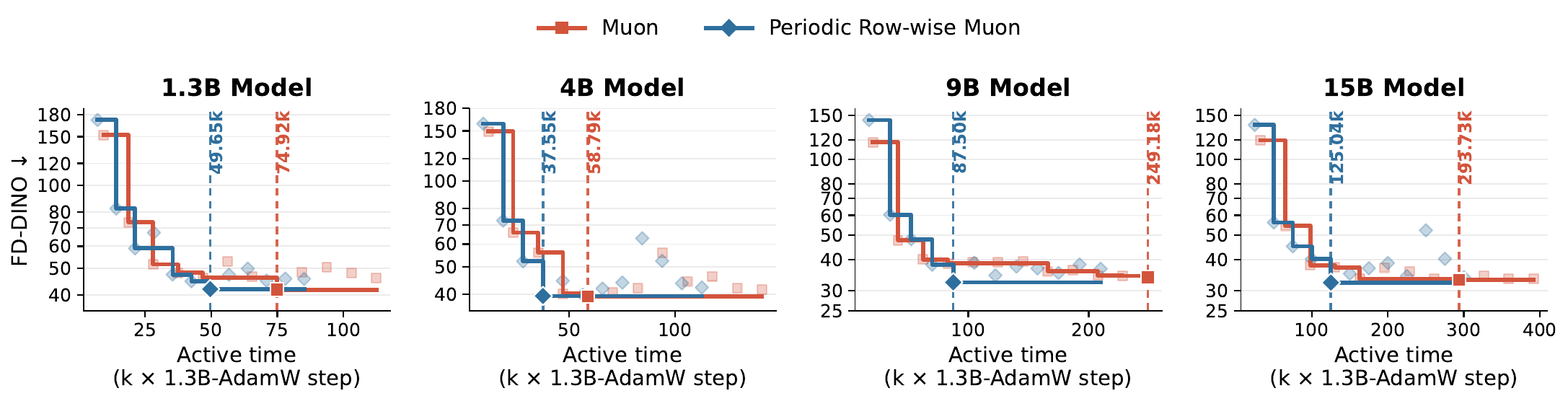}
    \caption{
        Generative quality frontier under optional checkpoint selection.
    }
    \label{fig:checkpoint_frontier}
\end{figure*}

\subsection{Periodic Row-wise Muon Generation Quality and Training Efficiency}
\label{sec:quality_efficiency}

We next evaluate whether Periodic Row-wise Muon preserves vanilla Muon's
generation quality while reducing training compute and communication cost.
Figure~\ref{fig:fd_summary} shows that it improves final-checkpoint FD-DINO
over AdamW by 11.1--17.8\% across scales and slightly outperforms vanilla
Muon at 1.3B.
Its best-observed FD-DINO remains within 0.5\% of vanilla Muon at 1.3B and
4B, while improving it by approximately 4.5\% at 9B and 2.7\% at 15B.
Table~\ref{tab:final-60k-metrics} shows a similarly competitive broader
generation-quality profile: at 9B, Periodic Row-wise Muon increases GenEval2
AM from 51.77 to 57.33 and GM from 11.35 to 15.97, while at 15B its primary
fidelity metrics remain close to vanilla Muon and Coverage and Density
improve.

Given their comparable best generation quality,
Figure~\ref{fig:checkpoint_frontier} compares the best achieved FD-DINO
against normalized active training time.
Periodic Row-wise Muon reaches its respective best result with
33.7\%, 36.1\%, 64.8\%, and 57.4\% less active time at 1.3B, 4B, 9B, and
15B, respectively, with the largest gains at scale.
A complementary comparison with AdamW in active-time efficiency and final
generation quality is provided in
Appendix~\ref{app:adamw_periodic_active_time}.

\begin{table*}[t]
  \centering
  \caption{Final 60k-step comparison.
  Bold denotes the best result among optimizers at the same model size.}
  \label{tab:final-60k-metrics}
  \setlength{\tabcolsep}{0pt}
  \renewcommand{\arraystretch}{1.08}
  \fontsize{6.0}{7.2}\selectfont
  \resizebox{0.9\textwidth}{!}{
\begin{tabular*}{\textwidth}{@{\extracolsep{\fill}}llcccccccccc@{}}
\toprule
\multirow{2}{*}{Model} &
\multirow{2}{*}{Optimizer} &
\multicolumn{3}{c}{Discrepancy} &
\multicolumn{4}{c}{Fidelity \& Diversity} &
\multicolumn{1}{c}{Alignment} &
\multicolumn{2}{c}{Compositionality} \\
\cmidrule(lr){3-5}\cmidrule(lr){6-9}\cmidrule(lr){10-10}\cmidrule(lr){11-12}
& &
FD-DINO$\downarrow$ &
FID$\downarrow$ &
MMD-DINO$\downarrow$ &
Precision$\uparrow$ &
Recall$\uparrow$ &
Coverage$\uparrow$ &
Density$\uparrow$ &
HPSv2$\uparrow$ &
GenEval2 AM$\uparrow$ &
GenEval2 GM$\uparrow$ \\
\midrule
\multirow{3}{*}{1.3B} & AdamW & $53.93$ & $6.71$ & $0.0312$ & $0.9405$ & $0.8874$ & $0.9401$ & $0.9665$ & $20.53$ & $38.66$ & $7.10$ \\
 & Muon & $46.08$ & $6.54$ & $\mathbf{0.0268}$ & $0.9487$ & $\mathbf{0.9071}$ & $\mathbf{0.9492}$ & $\mathbf{0.9918}$ & $\mathbf{20.84}$ & $44.72$ & $9.38$ \\
 & Periodic Row-wise Muon & $\mathbf{45.65}$ & $\mathbf{6.29}$ & $0.0270$ & $\mathbf{0.9517}$ & $0.9065$ & $0.9485$ & $0.9756$ & $20.36$ & $\mathbf{45.89}$ & $\mathbf{9.75}$ \\
\midrule
\multirow{3}{*}{4B}
& AdamW
& $51.39$
& $7.35$
& $0.0271$
& $0.9371$
& $0.8995$
& $0.9388$
& $\mathbf{0.9898}$
& $\mathbf{21.40}$
& $45.70$
& $9.13$ \\

& Muon
& $\mathbf{41.62}$
& $\mathbf{6.08}$
& $0.0261$
& $0.9385$
& $\mathbf{0.9136}$
& $0.9431$
& $0.9639$
& $20.66$
& $46.58$
& $9.06$ \\

& Periodic Row-wise Muon
& $42.25$
& $6.35$
& $\mathbf{0.0218}$
& $\mathbf{0.9413}$
& $0.9057$
& $\mathbf{0.9516}$
& $0.9837$
& $21.34$
& $\mathbf{49.38}$
& $\mathbf{10.58}$ \\
\midrule
\multirow{3}{*}{9B} & AdamW & $41.26$ & $6.19$ & $0.0216$ & $0.9455$ & $0.9045$ & $0.9495$ & $\mathbf{0.9873}$ & $21.39$ & $45.93$ & $9.66$ \\
 & Muon & $\mathbf{33.97}$ & $\mathbf{5.61}$ & $0.0199$ & $\mathbf{0.9516}$ & $\mathbf{0.9269}$ & $\mathbf{0.9531}$ & $0.9601$ & $\mathbf{22.60}$ & $51.77$ & $11.35$ \\
 & Periodic Row-wise Muon & $36.70$ & $6.13$ & $\mathbf{0.0192}$ & $0.9494$ & $0.9134$ & $0.9511$ & $0.9793$ & $22.44$ & $\mathbf{57.33}$ & $\mathbf{15.97}$ \\
\midrule
\multirow{3}{*}{15B} & AdamW & $40.23$ & $6.24$ & $0.0217$ & $0.9448$ & $0.9095$ & $0.9473$ & $0.9638$ & $21.99$ & $47.34$ & $9.18$ \\
 & Muon & $\mathbf{33.51}$ & $\mathbf{5.57}$ & $\mathbf{0.0191}$ & $0.9445$ & $\mathbf{0.9304}$ & $0.9501$ & $0.9367$ & $\mathbf{22.37}$ & $\mathbf{57.93}$ & $\mathbf{14.55}$ \\
 & Periodic Row-wise Muon & $33.99$ & $5.63$ & $0.0197$ & $\mathbf{0.9482}$ & $0.9248$ & $\mathbf{0.9554}$ & $\mathbf{0.9673}$ & $21.35$ & $52.26$ & $11.97$ \\
\bottomrule
\end{tabular*}}
\end{table*}

\subsection{System Efficiency}
\label{sec:system_efficiency}

We next analyze the systems-level source of the wall-clock time improvement in Table~\ref{tab:system_efficiency}.
Across the 4 model scales, Periodic Row-wise Muon reduces end-to-end
step time by 15.7--24.3\% relative to vanilla Muon, while reducing
optimizer time by 46.9--54.3\%.
On the largest 15B model, optimizer time decreases by 54.3\%, resulting in a 23.4\% reduction
in total step time.

The communication measurements exhibit the expected reduction from
periodic spectral refreshes.
With $K=3$, the average number of optimizer specific bucketed all-gathers is reduced
by around 66.6\%.
The corresponding logical communication volume decreases by 66.7\%
at every scale.
The sharded RowNorm path introduces only approximately 0.7 small
all-reduces per step.

Figure~\ref{fig:profiler_15b} shows the execution breakdown for the 15B model.
Periodic Row-wise Muon replaces two of every three NS5 phases and their
full-momentum all-gathers with short RowNorm updates, while overlapping
refresh communication with NS5 computation and norm-statistics communication
with local computation.
This reduces both optimizer computation and exposed communication.

\begin{table}[t]
\centering
\caption{System-efficiency comparison on 32 H100 nodes.}
\label{tab:system_efficiency}

\small
\setlength{\tabcolsep}{2.2pt}
\renewcommand{\arraystretch}{1.10}

\resizebox{0.7\textwidth}{!}{%
\begin{tabular}{
@{}
l@{\hspace{8pt}}l
r@{\hspace{2pt}}G
r@{\hspace{2pt}}G
c
c
c
r@{\hspace{2pt}}G
@{}
}
\toprule

\multirow{2}{*}{Model}
&
\multirow{2}{*}{Method}
&
\multicolumn{5}{c}{Computation}
&
\multicolumn{4}{c}{Communication}
\\

\cmidrule(lr){3-7}
\cmidrule(l){8-11}

&
&
\multicolumn{2}{c}{Step time $\downarrow$}
&
\multicolumn{2}{c}{Opt. time $\downarrow$}
&
\multicolumn{1}{c}{Opt. share $\downarrow$}
&
AG/step $\downarrow$
&
AR/step $\downarrow$
&
\multicolumn{2}{c}{
  \makecell[c]{
    Comm. volume $\downarrow$\\[-1pt]
    \scriptsize GiB/rank/step
  }
}
\\

\midrule

\multirow{2}{*}{1.3B}
& Muon
& 1.873 & {}
& 0.835 & {}
& 44.6\%
& 10.0
& 0.0
& 2.38 & {}
\\

& Periodic Row-wise Muon
& 1.419 & {$(-24.3\%)$}
& 0.443 & {$(-46.9\%)$}
& 31.3\%
& 3.3
& 0.7
& 0.79 & {$(-66.7\%)$}
\\

\midrule

\multirow{2}{*}{4B}
& Muon
& 2.352 & {}
& 1.009 & {}
& 42.9\%
& 32.0
& 0.0
& 7.58 & {}
\\

& Periodic Row-wise Muon
& 1.878 & {$(-20.2\%)$}
& 0.536 & {$(-46.9\%)$}
& 28.6\%
& 10.7
& 0.7
& 2.53 & {$(-66.7\%)$}
\\

\midrule

\multirow{2}{*}{9B}
& Muon
& 4.153 & {}
& 1.689 & {}
& 40.7\%
& 80.0
& 0.0
& 17.30 & {}
\\

& Periodic Row-wise Muon
& 3.500 & {$(-15.7\%)$}
& 0.868 & {$(-48.6\%)$}
& 24.8\%
& 26.7
& 0.7
& 5.77 & {$(-66.7\%)$}
\\

\midrule

\multirow{2}{*}{15B}
& Muon
& 6.527 & {}
& 2.884 & {}
& 44.2\%
& 119.0
& 0.0
& 28.62 & {}
\\

& Periodic Row-wise Muon
& 5.002 & {$(-23.4\%)$}
& 1.317 & {$(-54.3\%)$}
& 26.3\%
& 39.7
& 0.7
& 9.54 & {$(-66.7\%)$}
\\


\bottomrule
\end{tabular}%
}
\end{table}

\begin{figure*}[t]
    \centering
    \includegraphics[
        width=0.72\textwidth,
        trim={70pt 25pt 0 15pt},
        clip
    ]
        {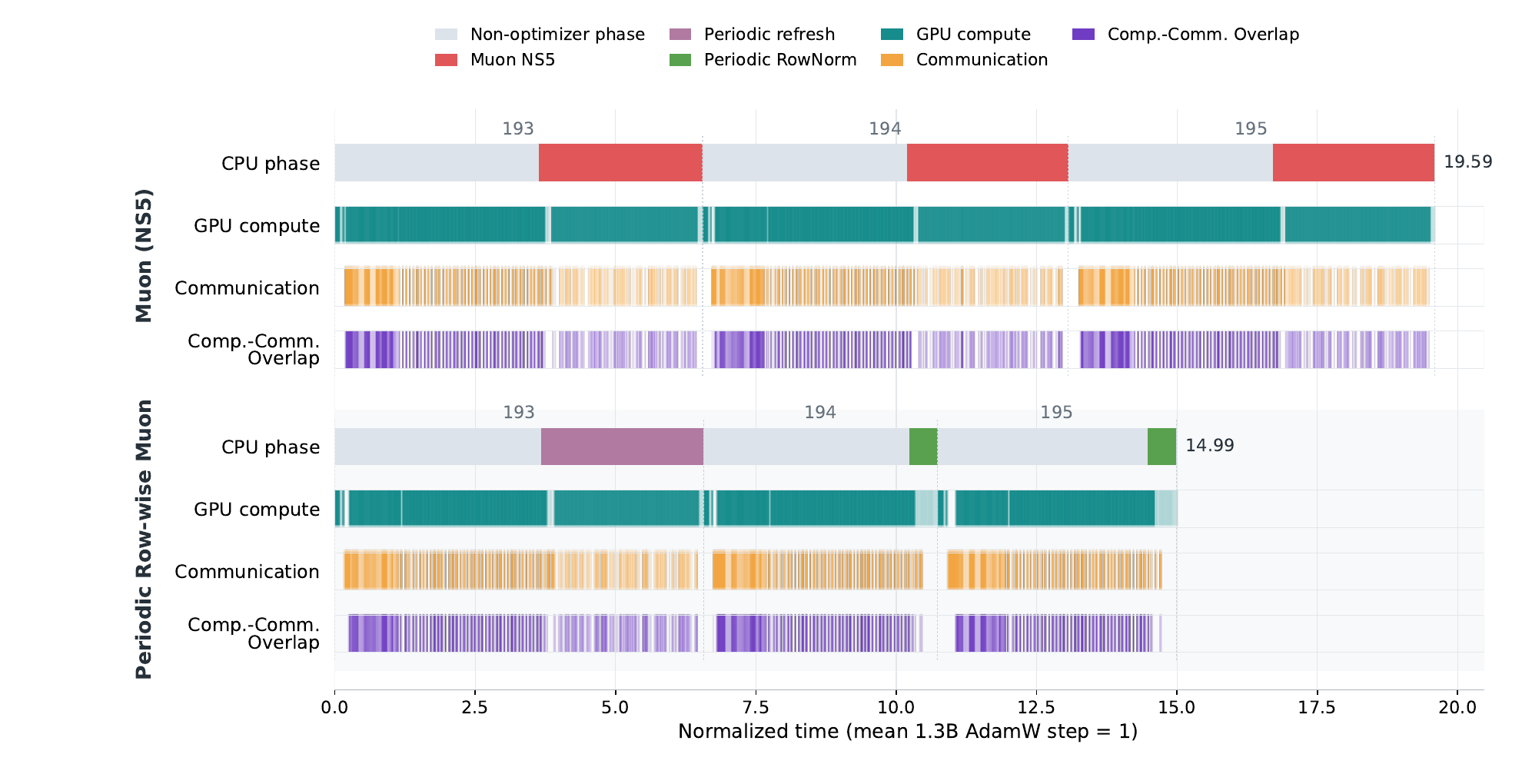}
    \caption{
        Simplified profiler traces for the 15B model.
    }
    \label{fig:profiler_15b}
\end{figure*}

\subsection{Ablation Studies}
\label{sec:ablations}

\textbf{Algorithmic ablation.}
Table~\ref{tab:ablations} (a) isolates the roles of periodic spectral
refresh, RowNorm geometry, and branch-specific scale calibration.
We compare vanilla Muon, RowNorm at every step, Periodic Row-wise Muon
with $\gamma=1$, a scalar-controlled baseline that applies $\gamma$ to both NS5 and RowNorm, and the complete
Periodic Row-wise Muon with $K=3$ and $\gamma=0.15$ on 1.3B scale.
Overall, the complete method provides the best generation
quality.

\begin{table}[t]
    \centering
    \caption{
        Ablation studies.
        (a) Algorithm-level ablation of Periodic Row-wise Muon on the
        1.3B model.
        (b) System-level ablation of the distributed implementation
        on the 15B model.
    }
    \label{tab:ablations}

    \vspace{-2pt}

    \resizebox{0.78\linewidth}{!}{%
    \begin{tabular}{@{}lcccc@{}}
        \toprule
        \textbf{(a)} Variant
        & NS5 Schedule
        & Off-refresh Update
        & $\gamma$
        & Best FD-DINO $\downarrow$ \\
        \midrule

        Vanilla Muon
        & Every step
        & --
        & --
        & 41.733 \\

        RowNorm every step
        & None
        & RowNorm
        & 0.15
        & 51.322 \\

        Periodic Row-wise Muon ($\gamma=1$)
        & $K=3$
        & RowNorm
        & 1.0
        & 44.294 \\

        Scalar-controlled ($\gamma$ for both NS5 and RowNorm)
        & $K=3$
        & Scalar-controlled
        & 0.15
        & 44.713 \\

        Periodic Row-wise Muon ($\gamma=0.15$)
        & $K=3$
        & RowNorm
        & 0.15
        & 41.913 \\

        \bottomrule
    \end{tabular}%
    }

    \par\vspace{5pt}

    \resizebox{0.82\linewidth}{!}{%
    \begin{tabular}{@{}lcccccc@{}}
        \toprule
        \textbf{(b)} Variant
        & Sharded
        & Bucketed
        & Comm.--Comp.
        & Step Time $\downarrow$
        & Opt. Time $\downarrow$
        & Comm. Volume $\downarrow$ \\
        & RowNorm
        & All-gather
        & Overlap
        & & & \\
        \midrule

        Naive Periodic $(K=3,\gamma=0.15)$
        & $\times$
        & $\times$
        & $\times$
        & 5.685
        & 2.046
        & 28.62 \\

        + Sharded RowNorm
        & $\checkmark$
        & $\times$
        & $\times$
        & 5.196
        & 1.498
        & $\mathbf{9.54}$ \\

        + Bucketed All-gather
        & $\checkmark$
        & $\checkmark$
        & $\times$
        & 5.028
        & 1.349
        & $\mathbf{9.54}$ \\

        Full System
        & $\checkmark$
        & $\checkmark$
        & $\checkmark$
        & $\mathbf{5.002}$
        & $\mathbf{1.317}$
        & $\mathbf{9.54}$ \\

        \bottomrule
    \end{tabular}%
    }
\end{table}


\textbf{System ablation.}
Table~\ref{tab:ablations}(b) incrementally adds the system optimizations
to a 15B naive periodic baseline, which uses the same update rule but
materializes full momentum every step without bucketing or overlap.
Sharded RowNorm reduces communication volume by \(66.7\%\), while bucketed
all-gather and communication--computation overlap further reduce exposed
refresh latency.
Together, these optimizations reduce optimizer and end-to-end step time by
\(35.6\%\) and \(12.0\%\), respectively.








\section{Related Work}

Unlike AdamW's element-wise adaptive updates
\citep{loshchilov2017decoupled}, matrix-aware optimizers exploit the matrix
structure of model parameters. Shampoo and SOAP construct structured
preconditioners \citep{gupta2018shampoo,vyas2025soap}, whereas Muon applies a
finite Newton--Schulz transformation to momentum matrices, producing updates
with global spectral structure \citep{jordan2024muon}. Muon has subsequently
been shown to scale to large language models and improve the compute--quality
trade-off over AdamW \citep{liu2025muon,shah2025practical}. In parallel,
prior work has established predictable DiT scaling across model size, data,
compute, and training hyperparameters
\citep{esser2024scaling,liang2026scaling,yin2025towards}, while recent studies
evaluate matrix-aware optimization in diffusion training and adapt momentum
orthogonalization to diffusion-specific parameter structures
\citep{schaipp2025optimization,chen2026cmuon}. Together, these directions
motivate evaluating optimizer scalability through optimization progress,
algorithmic compute, generative quality, and realized distributed efficiency.

Recent work reduces 
the computation required by Muon's spectral transformation, 
which is particularly
important for large DiTs. One direction accelerates orthogonalization through
GPU-friendly polynomial iterations, Gram-matrix formulations, or optimized
Newton--Schulz polynomials
\citep{amsel2026polar,GramNewtonSchulz,grishina2025accelerating}; another
reduces its frequency, scope, or input size through block-periodic updates,
temporal reuse, alternating spectral and sign-based updates, tiled
transformations, or row/column subsampling
\citep{khaled2026muonbp,dev2026cachemuon,bolatov2026lionmuon,
tang2026hierarchical,ahn2025dion2}. Other variants post-process every step
NS orthogonalization with row- or neuron-wise normalization and optional
neuron-wise second-moment statistics
\citep{li2025normuon,zhang2026muon+}. Distributed orthonormalized optimizers
and specialized Muon runtimes further reduce communication, redundant
computation, and exposed optimizer latency under model sharding
\citep{ahn2025dion,chen2026dmuon}. Our approach is complementary to faster
spectral kernels, it reduces spectral refresh frequency and directly applies
RowNorm to the current momentum as a low compute and communication cost dense substitute between
refreshes. RowNorm also admits recently studied symmetry properties
\citep{lau2026symmetry}, and its locality allows us to avoid full-momentum
materialization on non-refresh steps in large-scale distributed training.

\section{Conclusion and Limitations}

We show that Muon's optimization and generative quality advantages over AdamW persist as DiTs scale from 1.3B to 15B parameters, 
but its every-step NS5 computation and full-momentum communication substantially increase per-step execution time in distributed training.
Periodic Row-wise Muon addresses this bottleneck by combining periodic spectral refreshes with low compute and communication cost RowNorm updates and a sharded distributed implementation. Across all scales, it preserves generation quality broadly comparable to vanilla Muon while reducing optimizer time by 46.9--54.3\%.
However, our evaluation is limited to one DiT family, dataset, resolution, and 32-node H100 configuration, and does not cover other training regimes. We also use fixed global \(K\) and \(\gamma\), leaving layer-wise and adaptive schedules unexplored. Finally, refresh steps still retain full momentum communication and materialization, while the realized speedup may vary with hardware topology and parallelism strategy. We will leave these limitations as future works.

\bibliography{paper}
\bibliographystyle{iclr2027_conference}



\clearpage
\appendix

\section*{Appendix}

\providecommand{\RN}{\operatorname{\mathcal{R}}}
\providecommand{\Polar}{\operatorname{polar}}
\providecommand{\NSfive}{\operatorname{NS5}}
\providecommand{\E}{\mathbb{E}}
\providecommand{\R}{\mathbb{R}}
\providecommand{\Frob}{\mathrm{F}}

\section{Mathematical Analysis}
\label{app:mathematical-analysis}

This appendix gives the complete statements and proofs supporting the
geometric interpretation and the optimization-feasibility discussion in the
main text.  The results deliberately distinguish the exact
polar factor from the finite Newton--Schulz map used in the implementation.
They also distinguish a conditional descent guarantee from an unconditional
convergence theorem for the proposed optimizer.

\subsection{Notation and update convention}
\label{app:math-notation}

For matrices of the same shape, let
\begin{equation}
\label{eq:frobenius-inner-product}
\langle A,B\rangle \coloneqq \operatorname{tr}(A^\top B),
\quad
\|A\|_{\mathrm F}^{2} \coloneqq \langle A,A\rangle,
\end{equation}
and let \(\|A\|_{2}\) denote the spectral norm.  For
\(X\in\R^{r\times c}\), define the row-wise maximum norm
\begin{equation}
    \|X\|_{2,\infty}:=\max_{1\leq i\leq r}\|X_{i:}\|_{2}.
\end{equation}
The practical RowNorm map is
\begin{equation}
    \RN_{\varepsilon}(X)_{i:}
    :=
    \frac{X_{i:}}{\max\{\|X_{i:}\|_{2},\varepsilon\}},
    \qquad \varepsilon>0.
    \label{eq:app-rownorm-eps}
\end{equation}
When every row is nonzero, \(\RN(X)\) denotes the unregularized map
obtained by setting the denominator to \(\|X_{i:}\|_{2}\).

For the optimization result, \(\theta_t\) denotes the vector containing all
trainable parameters, and we write the implemented update as
\begin{equation}
    \theta_{t+1}=\theta_t-\eta_t H_t.
    \label{eq:app-effective-update}
\end{equation}
Here \(H_t\) is the complete effective direction. It includes the
selected NS5 or RowNorm matrix blocks, Muon's shape factors, the RowNorm
multiplier, all AdamW-updated blocks, any parameter-group learning rate
ratios relative to \(\eta_t\), and decoupled weight decay.  This convention is
important because the RowNorm multiplier scales only the RowNorm blocks, not
the complete optimizer direction.

\subsection{RowNorm as a row-wise variational direction}
\label{app:rownorm-variational}



\begin{lemma}[Row-wise variational characterization]
\label{lem:app-rownorm-variational}
Let \(X\in\mathbb{R}^{r\times c}\), and suppose that every row of
\(X\) is nonzero. Define
\[
\|U\|_{2,\infty}:=\max_{1\le i\le r}\|U_{i,:}\|_2.
\]
Then
\begin{equation}
R(X)
=
\operatorname*{arg\,max}_{U\in\mathbb{R}^{r\times c}}
\left\{
\langle X,U\rangle:
\|U\|_{2,\infty}\le 1
\right\},
\end{equation}
where the maximizer is unique.  
\end{lemma}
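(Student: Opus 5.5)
The plan is to exploit the fact that both the objective and the constraint decouple across rows. First, write
\[
\langle X,U\rangle=\sum_{i=1}^{r}\langle X_{i,:},U_{i,:}\rangle .
\]
Second, note that the constraint \(\|U\|_{2,\infty}\le 1\) is equivalent to the \(r\) separate constraints \(\|U_{i,:}\|_2\le 1\), one for each \(i\). The feasible set is therefore a product of unit Euclidean balls in \(\mathbb{R}^{c}\). A feasible \(U\) maximizes the sum if and only if each row \(U_{i,:}\) maximizes \(\langle X_{i,:},u\rangle\) over \(\|u\|_2\le 1\). For the "only if" direction, suppose some row were strictly suboptimal. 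Replacing that row by a row maximizer would keep \(U\) feasible and strictly increase the objective.

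Next, I would solve each row subproblem using Cauchy--Schwarz:
\[
\langle X_{i,:},u\rangle\le\|X_{i,:}\|_2\|u\|_2\le\|X_{i,:}\|_2 .
\]
Since \(X_{i,:}\neq 0\), the candidate \(u^\star=X_{i,:}/\|X_{i,:}\|_2\) is feasible and attains this bound. Stacking these rows gives exactly the unregularized \(R(X)\), and the optimal value is \(\sum_i\|X_{i,:}\|_2\). So \(R(X)\) lies in the arg max.

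The main work is uniqueness, which I would also handle row by row. Let \(u\) be any row maximizer. Then both inequalities in the chain must hold with equality. Equality in the second forces \(\|u\|_2=1\), because \(\|X_{i,:}\|_2>0\). Equality in Cauchy--Schwarz, with \(\langle X_{i,:},u\rangle>0\), forces \(u=\alpha X_{i,:}\) for some \(\alpha>0\). Combining the two gives \(\alpha=1/\|X_{i,:}\|_2\).

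Since every maximizer of the full problem must be row-wise optimal, it must equal \(R(X)\). This uniqueness step is where the nonzero-row hypothesis is essential. If some row were zero, any \(u\) in the unit ball would be optimal for that row, and uniqueness would fail.
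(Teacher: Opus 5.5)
Your proposal is correct and follows essentially the same route as the paper: decouple the objective and constraint across rows, bound each term by Cauchy--Schwarz, note that $R(X)$ attains the bound, and get uniqueness from the equality conditions using that every row is nonzero. The only difference is cosmetic. You argue through row-wise optimality (replacing a strictly suboptimal row), while the paper reads uniqueness directly off equality in the summed chain of inequalities.
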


\begin{proof}
The constraint \(\|U\|_{2,\infty}\leq 1\) is equivalent to
\(\|U_{i:}\|_2\leq 1\) for every row.  Hence, by Cauchy--Schwarz,
\begin{equation}
    \langle X,U\rangle
    =\sum_{i=1}^{r}\langle X_{i:},U_{i:}\rangle
    \leq
    \sum_{i=1}^{r}\|X_{i:}\|_2\|U_{i:}\|_2
    \leq
    \sum_{i=1}^{r}\|X_{i:}\|_2.
\end{equation}
Equality is attained by
\(U_{i:}=X_{i:}/\|X_{i:}\|_2\) for every \(i\), which is exactly
\(U=\RN(X)\).  Because every \(X_{i:}\) is nonzero, equality in the two
inequalities requires this choice row by row, proving uniqueness.
\end{proof}

Lemma~\ref{lem:app-rownorm-variational} characterizes RowNorm in its own
geometry.  In particular, it neither treats RowNorm as an approximation to
NS5 nor asserts that the two updates follow the same optimization trajectory.

\subsection{Polar and Row-wise Variational Geometries}
\label{app:distinct-geometries}

We use the orientation convention from Section \ref{sec:complementary-geometries}: a matrix is
transposed when necessary so that \(X\in\R^{r\times c}\) with \(r\leq c\).
If \(X\) has full row rank, its row-polar factor is
\begin{equation}
    \Polar(X):=(XX^{\top})^{-1/2}X,
    \qquad
    \Polar(X)\Polar(X)^{\top}=I_r.
    \label{eq:app-polar-definition}
\end{equation}

\begin{proposition}[Spectral variational characterization]
\label{prop:app-polar-variational}
Let \(X\in\R^{r\times c}\), where \(r\leq c\), and suppose that
\(\operatorname{rank}(X)=r\).  Then
\begin{equation}
    \Polar(X)
    =
    \underset{U\in\R^{r\times c}}{\arg\max}\;
    \langle X,U\rangle
    \quad\text{subject to}\quad
    \|U\|_2\leq 1.
    \label{eq:app-polar-variational}
\end{equation}
The maximizer is unique, and the optimal value is the nuclear norm
\(\|X\|_*\).
\end{proposition}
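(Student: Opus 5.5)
We will work with a thin singular value decomposition \(X=U_0\Sigma V_0^\top\), where \(U_0\in\R^{r\times r}\) is orthogonal, \(V_0\in\R^{c\times r}\) has orthonormal columns, and \(\Sigma=\operatorname{diag}(\sigma_1,\ldots,\sigma_r)\) with all \(\sigma_i>0\) because \(\operatorname{rank}(X)=r\). The first step is to identify the polar factor in these coordinates. We have \(XX^\top=U_0\Sigma^2U_0^\top\), so \((XX^\top)^{-1/2}=U_0\Sigma^{-1}U_0^\top\). Hence \(\Polar(X)=U_0V_0^\top\), which satisfies \(\|\Polar(X)\|_2=1\) and \(\langle X,\Polar(X)\rangle=\operatorname{tr}(V_0\Sigma U_0^\top U_0V_0^\top)=\operatorname{tr}(\Sigma)=\|X\|_*\). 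This shows \(\Polar(X)\) is feasible and attains the value \(\|X\|_*\).

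For the upper bound we use the trace duality between the spectral and nuclear norms. For any \(U\) with \(\|U\|_2\le1\),
\begin{equation*}
\langle X,U\rangle=\operatorname{tr}(V_0\Sigma U_0^\top U)=\sum_{i=1}^r\sigma_i\, u_i^\top U v_i\le\sum_{i=1}^r\sigma_i\|u_i\|_2\|U v_i\|_2\le\|X\|_*,
\end{equation*}
where \(u_i\) and \(v_i\) are the columns of \(U_0\) and \(V_0\). Combined with the first step, this shows that the optimal value is exactly \(\|X\|_*\) and that \(\Polar(X)\) is a maximizer.

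Uniqueness is the main obstacle, since \(U\) lives in \(\R^{r\times c}\) and its action off the row space of \(X\) must also be pinned down. Equality forces \(u_i^\top Uv_i=1\) for every \(i\), because each \(\sigma_i>0\). Since \(\|Uv_i\|_2\le1\) and \(\|u_i\|_2=1\), equality in Cauchy--Schwarz gives \(Uv_i=u_i\), so \(UV_0=U_0\).

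It remains to show that \(U\) vanishes on the orthogonal complement of \(\operatorname{span}(V_0)\). Take a unit vector \(w\perp\operatorname{span}(V_0)\) and a unit \(v\in\operatorname{span}(V_0)\) with \(Uv=a\), where \(\|a\|_2=1\). For the unit vector \((\cos t)v+(\sin t)w\),
\begin{equation*}
\|U((\cos t)v+(\sin t)w)\|_2^2=\cos^2t+2\sin t\cos t\, a^\top Uw+\sin^2t\|Uw\|_2^2\le1.
\end{equation*}
At \(t=0\) the left side equals \(1\), so its derivative at \(t=0\) must vanish, giving \(a^\top Uw=0\). Because the vectors \(Uv=U_0V_0^\top v\) range over all of \(\R^r\) as \(v\) ranges over \(\operatorname{span}(V_0)\), this forces \(Uw=0\). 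An equivalent route is to apply \(UU^\top\preceq I\) to the identity \(UV_0V_0^\top U^\top=I\).

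Therefore \(U=U_0V_0^\top=\Polar(X)\), which establishes uniqueness.
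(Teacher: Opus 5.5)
Your proof is correct, and it follows the paper's outline: a thin SVD, the bound \(\sum_i\sigma_i u_i^\top U v_i\le\sum_i\sigma_i\), and then the equality case. The uniqueness step differs, and yours takes a longer route.

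In your notation, the paper extends \(V_0\) to an orthogonal \(V=[V_0\;V_\perp]\) and works with the contraction \(Z=U_0^\top U V\in\R^{r\times c}\). Equality forces \(Z_{ii}=1\). Each row of a contraction has Euclidean norm at most one, so every row of \(Z\) is a standard basis vector and \(Z=[I_r\;0]\). This settles uniqueness in one step and handles the \(V_\perp\) block automatically.

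Row \(i\) of \(Z\) is \((U^\top u_i)^\top V\). So if you had applied Cauchy--Schwarz as \(u_i^\top U v_i\le\|U^\top u_i\|_2\|v_i\|_2\), equality would give \(U^\top u_i=v_i\) for every \(i\). Because \(U_0\) is square, \(\{u_i\}\) is an orthonormal basis of all of \(\R^r\), and this already yields \(U^\top=V_0U_0^\top\). Testing on the \(v_i\) side instead only fixes \(U\) on \(\operatorname{span}(V_0)\), since \(V_0\) is not square. That is why you needed the extra step showing that \(U\) annihilates the complement.

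That extra step is valid as written. The derivative argument works, and so does the alternative you mention: \(U(I-V_0V_0^\top)U^\top=UU^\top-I\preceq 0\), while the left side is positive semidefinite, so it vanishes. The step also encodes a reusable fact: a contraction that is isometric on a subspace maps the orthogonal complement of that subspace into the orthogonal complement of the image, which here is \(\{0\}\).

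So your route avoids completing \(V_0\) to an orthogonal basis, at the cost of an additional lemma. The paper's row-norm observation is shorter.
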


\begin{proof}
Let \(X=A\Sigma B^{\top}\) be a thin singular value decomposition, where
\(A\in\R^{r\times r}\) is orthogonal,
\(B\in\R^{c\times r}\) has orthonormal columns, and
\(\Sigma=\operatorname{diag}(\sigma_1,\ldots,\sigma_r)\) with
\(\sigma_i>0\).  Extend \(B\) to a square orthogonal matrix
\(V=[B\;B_{\perp}]\in\R^{c\times c}\).  For any feasible \(U\), set
\(Z=A^{\top}UV\).  Orthogonal invariance gives \(\|Z\|_2\leq 1\), and
\begin{equation}
    \langle X,U\rangle
    =\operatorname{tr}(\Sigma A^{\top}UB)
    =\sum_{i=1}^{r}\sigma_i Z_{ii}
    \leq\sum_{i=1}^{r}\sigma_i
    =\|X\|_*.
\end{equation}
The upper bound is attained by \(U=AB^{\top}=\Polar(X)\).  Because every
\(\sigma_i\) is positive, equality requires \(Z_{ii}=1\) for every
\(i\leq r\).  A contraction whose first \(r\) diagonal entries are all one
must have \(Z=[I_r\;0]\); otherwise at least one row would have Euclidean norm
greater than one.  Thus \(U=AB^{\top}\) is the unique maximizer.
\end{proof}

The two feasible sets obey
\begin{equation}
    \{U:\|U\|_2\leq 1\}
    \subseteq
    \{U:\|U\|_{2,\infty}\leq 1\}
    \subseteq
    \{U:\|U\|_2\leq\sqrt{r}\}.
    \label{eq:app-feasible-set-inclusions}
\end{equation}
Indeed, every row norm is at most the spectral norm, while
\(\|U\|_2\leq\|U\|_{\mathrm F}\leq\sqrt{r}\|U\|_{2,\infty}\).
Thus the polar direction solves a globally coupled spectral-norm problem,
whereas RowNorm solves a product of independent row-wise problems.  Finite
NS5 is the implemented spectral map motivated by the former geometry; the
proposition does not identify finite NS5 with the exact polar factor.

\subsection{Conditional stability of the ideal polar direction}
\label{app:polar-stability}

\begin{proposition}[Perturbation of the row-polar factor]
\label{prop:app-polar-perturbation}
Let \(A,B\in\R^{r\times c}\), where \(r\leq c\), and suppose that both
matrices have full row rank.  Then
\begin{equation}
    \|\Polar(A)-\Polar(B)\|_{\mathrm F}
    \leq
    \frac{2\|A-B\|_{\mathrm F}}
    {\sigma_{\min}(A)+\sigma_{\min}(B)}.
    \label{eq:app-polar-perturbation}
\end{equation}
Consequently, on any region satisfying
\(\sigma_{\min}(X)\geq\sigma_0>0\), the polar map is
\(1/\sigma_0\)-Lipschitz in Frobenius norm.
\end{proposition}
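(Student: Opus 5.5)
The plan is to reduce the rectangular case to a Sylvester-type identity in the \(r\times r\) polar factors, in the style of R.-C. Li's classical perturbation argument. Write \(P=\Polar(A)\), \(Q=\Polar(B)\), \(H_A=(AA^\top)^{1/2}\) and \(H_B=(BB^\top)^{1/2}\), so that \(A=H_AP\) and \(B=H_BQ\). Full row rank gives \(H_A,H_B\succ0\), with \(\lambda_{\min}(H_A)=\sigma_{\min}(A)\) and \(\lambda_{\min}(H_B)=\sigma_{\min}(B)\). We also have \(PP^\top=QQ^\top=I_r\). Set \(E=A-B\).

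First, split \(P-Q\) orthogonally with respect to the row space of \(Q\):
\[
P-Q=(P-Q)Q^\top Q+P(I-Q^\top Q).
\]
The two pieces are Frobenius-orthogonal. The same split can be made with the roles of \(A\) and \(B\) exchanged.

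For the complementary piece, note that \(B(I-Q^\top Q)=0\). Hence
\[
E(I-Q^\top Q)=A(I-Q^\top Q)=H_AP(I-Q^\top Q),
\]
which yields
\[
\|P(I-Q^\top Q)\|_{\mathrm F}\le \|E(I-Q^\top Q)\|_{\mathrm F}/\sigma_{\min}(A).
\]
The symmetric statement holds for \(Q(I-P^\top P)\).

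For the \(r\times r\) piece, set \(C=PQ^\top\). Compute \(EQ^\top=H_AC-H_B\) and \(EP^\top=H_A-H_BC^\top\). These combine into Sylvester-type equations in \(I-C\). For example, adding the transposes gives
\[
H_A(I-C)+(I-C)^\top H_B=\ldots,
\]
with the right-hand side expressed through \(E(P-Q)^\top\) and \(E(P+Q)^\top\). Taking symmetric and skew parts and using the standard lemma below controls \(\|I-C\|_{\mathrm F}\), and hence \((P-Q)Q^\top\), by \(\|E\cdot\|_{\mathrm F}/(\sigma_{\min}(A)+\sigma_{\min}(B))\).

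The lemma: if \(H_1,H_2\succ0\), then \(\|H_1X+XH_2\|_{\mathrm F}\ge(\lambda_{\min}(H_1)+\lambda_{\min}(H_2))\|X\|_{\mathrm F}\). It follows by diagonalizing both matrices and reading off entrywise factors \(\lambda_i+\mu_j\).

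The final step assembles the pieces. The in-row-space part and the two complementary parts must be combined with Pythagoras so that the projections of \(E\) add up to at most \(2\|E\|_{\mathrm F}\) rather than something larger. The Lipschitz corollary is then immediate: on \(\{\sigma_{\min}\ge\sigma_0\}\) the denominator is at least \(2\sigma_0\).

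The main obstacle is getting the sharp constant \(2\) in the rectangular case. The complementary (row-space-mismatch) terms naturally come with denominator \(\sigma_{\min}(A)\) or \(\sigma_{\min}(B)\) individually, not with their sum. To handle this, I would first try to bound the complementary parts by symmetrizing. The two expressions \(\|P(I-Q^\top Q)\|_{\mathrm F}\) and \(\|Q(I-P^\top P)\|_{\mathrm F}\) are equal, since both are the sine of the canonical angles between the two row spaces. Bounding this common quantity once via \(A\) and once via \(B\), and averaging the resulting inequalities weighted by \(\sigma_{\min}(A)\) and \(\sigma_{\min}(B)\), gives a single bound with denominator \(\sigma_{\min}(A)+\sigma_{\min}(B)\).

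If the constant \(2\) cannot be kept this way, the fallback is to prove the inequality with a slightly larger absolute constant, which still supports the Lipschitz conclusion qualitatively. In the square case \(r=c\) the complementary terms vanish and the Sylvester argument alone gives the stated bound.
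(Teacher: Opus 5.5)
Your Li-style route differs from the paper's, and it does deliver the constant $2$. Two steps need fixing: the Sylvester equation as you wrote it, and the closing computation, which you left hedged.

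\textbf{How the two routes differ.} The paper never splits $P-Q$. It expands
\[
\langle A-B,P-Q\rangle=\operatorname{tr}\bigl(H_A(I-C^\top)\bigr)+\operatorname{tr}\bigl(H_B(I-C)\bigr)\ge\bigl(\sigma_{\min}(A)+\sigma_{\min}(B)\bigr)\bigl(r-\operatorname{tr}C\bigr).
\]
This uses only the symmetry of $H_A,H_B$ and the fact that $I-\tfrac12(C+C^\top)\succeq0$, which holds because $\|C\|_2\le1$. It then uses $r-\operatorname{tr}C=\tfrac12\|P-Q\|_{\mathrm F}^2$ and Cauchy--Schwarz. This monotonicity inequality handles the rectangular case at no extra cost, and the constant $2$ comes out automatically. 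Your argument needs more machinery: the Sylvester lemma, the row-space/complement split, and the symmetrization. In exchange, it separates the rotation inside the row space, $\|I-C\|_{\mathrm F}$, from the tilt of the row space, $s=\|P(I-Q^\top Q)\|_{\mathrm F}$. Each is controlled by the matching projections of $E$. This yields finer by-products, such as $s\le\min\{\|E(I-Q^\top Q)\|_{\mathrm F}/\sigma_{\min}(A),\ \|E(I-P^\top P)\|_{\mathrm F}/\sigma_{\min}(B)\}$.

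\textbf{Fix 1: the Sylvester equation.} The displayed equation $H_A(I-C)+(I-C)^\top H_B=\ldots$ is not of the form your lemma handles. It is also not what your two identities produce, and no symmetric/skew splitting is needed. Transpose $EP^\top=H_A-H_BC^\top$ and subtract $EQ^\top=H_AC-H_B$. This gives a genuine Sylvester equation:
\[
H_A(I-C)+(I-C)H_B=PE^\top-EQ^\top .
\]
Your lemma then gives $(\sigma_{\min}(A)+\sigma_{\min}(B))\|I-C\|_{\mathrm F}\le x_1+x_2$, where $x_1=\|EP^\top P\|_{\mathrm F}$ and $x_2=\|EQ^\top Q\|_{\mathrm F}$.

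\textbf{Fix 2: closing with constant $2$.} The constant is not in doubt, so the fallback is unnecessary. Put $y_1=\|E(I-P^\top P)\|_{\mathrm F}$ and $y_2=\|E(I-Q^\top Q)\|_{\mathrm F}$. Your symmetrization gives $(\sigma_{\min}(A)+\sigma_{\min}(B))\,s\le y_1+y_2$. By Pythagoras, $x_i^2+y_i^2=\|E\|_{\mathrm F}^2$. Using $\|P-Q\|_{\mathrm F}^2=\|I-C\|_{\mathrm F}^2+s^2$,
\[
(\sigma_{\min}(A)+\sigma_{\min}(B))^2\|P-Q\|_{\mathrm F}^2\le(x_1+x_2)^2+(y_1+y_2)^2=2\|E\|_{\mathrm F}^2+2(x_1x_2+y_1y_2)\le4\|E\|_{\mathrm F}^2 .
\]
The last step is Cauchy--Schwarz in $\R^2$. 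This is exactly the stated bound.
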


\begin{proof}
Write the row-polar decompositions as
\begin{equation}
    A=H_A P_A,
    \qquad
    B=H_B P_B,
\end{equation}
where \(H_A=(AA^{\top})^{1/2}\),
\(H_B=(BB^{\top})^{1/2}\),
\(P_A=\Polar(A)\), and \(P_B=\Polar(B)\).  Both
\(P_AP_A^{\top}=I_r\) and \(P_BP_B^{\top}=I_r\).  Let
\(C=P_AP_B^{\top}\), \(\operatorname{sym}(C):=(C+C^\top)/2\).  Since \(\|C\|_2\leq 1\),
\(I_r-\operatorname{sym}(C)\) is positive semidefinite.  Direct expansion
gives
\begin{align}
    \langle A-B,P_A-P_B\rangle
    &=\operatorname{tr}\!\left(H_A(I_r-C^{\top})\right)
      +\operatorname{tr}\!\left(H_B(I_r-C)\right) 
      \notag \\
    &\geq
      \bigl(\sigma_{\min}(A)+\sigma_{\min}(B)\bigr)
      \bigl(r-\operatorname{tr}(C)\bigr).
\end{align}
The skew-symmetric parts vanish inside the traces because \(H_A\) and
\(H_B\) are symmetric.  Moreover,
\begin{equation}
    r-\operatorname{tr}(C)
    =\frac{1}{2}\|P_A-P_B\|_{\mathrm F}^{2}.
\end{equation}
Combining this identity with Cauchy--Schwarz yields
\begin{equation}
    \|A-B\|_{\mathrm F}\|P_A-P_B\|_{\mathrm F}
    \geq
    \frac{\sigma_{\min}(A)+\sigma_{\min}(B)}{2}
    \|P_A-P_B\|_{\mathrm F}^{2}.
\end{equation}
If \(P_A=P_B\), the result is immediate; otherwise division by
\(\|P_A-P_B\|_{\mathrm F}\) proves~\eqref{eq:app-polar-perturbation}.
\end{proof}

Proposition~\ref{prop:app-polar-perturbation} provides a conditional
motivation for periodic spectral correction: moderate momentum drift
implies moderate drift of the ideal polar direction only when the
relevant matrices remain away from rank degeneracy. The proposition
concerns the exact polar factor and does not establish the same
perturbation bound for the finite NS5 map used in the implementation.
We therefore use it as geometric motivation rather than as a guarantee
that finite NS5 remains unchanged between refreshes.

\subsection{Conditional finite-horizon descent}
\label{app:conditional-descent}

Let \(\mathcal{F}_t\) denote the information available before the stochastic
direction at step \(t\) is realized.  The following result applies directly
to the complete effective direction in~\eqref{eq:app-effective-update}; it
does not require the iterates to track those of vanilla Muon.

\begin{theorem}[Finite-horizon bound under conditional alignment
]
\label{thm:app-conditional-descent}
Suppose that \(f:\R^d\to\R\) is \(L\)-smooth and bounded below by
\(f_{\inf}\).  Assume that, for each \(t\), there are deterministic constants
\(a_t>0\), \(b_t\geq 0\), and \(v_t\geq 0\) such that
\begin{align}
    \E\!\left[
        \langle\nabla f(\theta_t),H_t\rangle
        \mid\mathcal{F}_t
    \right]
    &\geq a_t\|\nabla f(\theta_t)\|_2^2,
    \label{eq:app-alignment-assumption}\\
    \E\!\left[
        \|H_t\|_2^2
        \mid\mathcal{F}_t
    \right]
    &\leq b_t\|\nabla f(\theta_t)\|_2^2+v_t.
    \label{eq:app-moment-assumption}
\end{align}
If
\begin{equation}
    d_t:=a_t-\frac{L\eta_t b_t}{2}>0
    \qquad\text{for }t=0,\ldots,T-1,
    \label{eq:app-positive-descent-coefficient}
\end{equation}
then
\begin{equation}
    \sum_{t=0}^{T-1}
    \eta_t d_t\,
    \E\|\nabla f(\theta_t)\|_2^2
    \leq
    f(\theta_0)-f_{\inf}
    +\frac{L}{2}\sum_{t=0}^{T-1}\eta_t^2v_t.
    \label{eq:app-finite-horizon-bound}
\end{equation}
This statement holds for any deterministic switching schedule, including a
periodic schedule with one NS5 step followed by \(K-1\) RowNorm steps.
\end{theorem}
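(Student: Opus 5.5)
The plan is the standard one-step descent argument for \(L\)-smooth functions, applied to the effective update~\eqref{eq:app-effective-update}, followed by conditional expectations, telescoping, and a lower bound on the final iterate. No structure of NS5 or RowNorm is needed beyond the two assumptions. Both are imposed on the complete direction \(H_t\), so the argument is blind to which branch was selected at step \(t\).

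\emph{Step 1 (one-step bound).} From \(L\)-smoothness, the descent lemma gives, for \(\theta_{t+1}=\theta_t-\eta_tH_t\),
\begin{equation*}
f(\theta_{t+1})\le f(\theta_t)-\eta_t\langle\nabla f(\theta_t),H_t\rangle+\frac{L\eta_t^2}{2}\|H_t\|_2^2 .
\end{equation*}
This holds pathwise, for every realization.

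\emph{Step 2 (conditional expectation).} Take \(\E[\,\cdot\mid\mathcal F_t]\). Since \(\theta_t\) is \(\mathcal F_t\)-measurable and \(\eta_t\) is deterministic, \(\eta_t\) can be pulled out of the conditional expectation. Substituting~\eqref{eq:app-alignment-assumption} and~\eqref{eq:app-moment-assumption} gives
\begin{equation*}
\E[f(\theta_{t+1})\mid\mathcal F_t]\le f(\theta_t)-\eta_t\Bigl(a_t-\frac{L\eta_tb_t}{2}\Bigr)\|\nabla f(\theta_t)\|_2^2+\frac{L}{2}\eta_t^2v_t .
\end{equation*}
The bracket is exactly \(d_t\). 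Condition~\eqref{eq:app-positive-descent-coefficient} is not needed for this inequality itself; it only guarantees that the left side of the final bound is a nonnegatively weighted sum, which is what makes the bound meaningful.

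\emph{Step 3 (telescoping).} Take total expectations using the tower property, sum over \(t=0,\dots,T-1\), and telescope. Then bound \(\E f(\theta_T)\ge f_{\inf}\) and rearrange; this yields~\eqref{eq:app-finite-horizon-bound}.

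\emph{Step 4 (switching schedules).} For the final claim, note that the schedule \(\rho_t\) is deterministic. The branch choice therefore enters only through the constants \(a_t,b_t,v_t\), which may differ between NS5 and RowNorm steps, and Steps 1–3 used nothing else. In particular the bound holds for a periodic schedule with one NS5 step followed by \(K-1\) RowNorm steps.

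The main obstacle is measure-theoretic bookkeeping, not any inequality. One must check that \(\theta_t\) is \(\mathcal F_t\)-measurable, so that \(\nabla f(\theta_t)\) factors out of the conditional expectation. One must also check that all expectations are finite. I would handle finiteness by noting that the moment assumption~\eqref{eq:app-moment-assumption} keeps \(\E\|H_t\|_2^2\) finite given a finite \(\E\|\nabla f(\theta_t)\|_2^2\). This in turn follows inductively from \(L\)-smoothness, under the mild standing assumption that \(\theta_0\) is deterministic or square-integrable.
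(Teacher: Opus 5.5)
Your proposal is correct and follows the paper's proof essentially step for step. Both apply the $L$-smoothness descent inequality to $\theta_{t+1}=\theta_t-\eta_tH_t$, take conditional expectation and substitute the two assumptions to obtain the coefficient $d_t$, then take total expectation, telescope, and use $f(\theta_T)\geq f_{\inf}$; your remarks on $\mathcal{F}_t$-measurability of $\theta_t$ and integrability of the iterates are sound bookkeeping that the paper leaves implicit.
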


\begin{proof}
By \(L\)-smoothness and~\eqref{eq:app-effective-update},
\begin{equation}
    f(\theta_{t+1})
    \leq
    f(\theta_t)
    -\eta_t\langle\nabla f(\theta_t),H_t\rangle
    +\frac{L\eta_t^2}{2}\|H_t\|_2^2.
\end{equation}
Taking conditional expectation and applying
\eqref{eq:app-alignment-assumption} and \eqref{eq:app-moment-assumption} gives
\begin{equation}
    \E[f(\theta_{t+1})\mid\mathcal{F}_t]
    \leq
    f(\theta_t)
    -\eta_t d_t\|\nabla f(\theta_t)\|_2^2
    +\frac{L\eta_t^2}{2}v_t.
\end{equation}
Taking total expectation, summing over \(t=0,\ldots,T-1\), and using
\(f(\theta_T)\geq f_{\inf}\) proves
\eqref{eq:app-finite-horizon-bound}.
\end{proof}

For example, if \(\eta_t=\eta\), \(a_t\geq a>0\),
\(0\leq b_t\leq b\) for some \(b>0\), \(v_t\leq v\), and \(0<\eta<2a/(Lb)\), then
\begin{equation}
    \frac{1}{T}\sum_{t=0}^{T-1}
    \E\|\nabla f(\theta_t)\|_2^2
    \leq
    \frac{f(\theta_0)-f_{\inf}}
    {T\eta\left(a-L\eta b/2\right)}
    +
    \frac{L\eta v}{2\left(a-L\eta b/2\right)}.
    \label{eq:app-constant-step-bound}
\end{equation}
This is a finite-horizon stationary-point bound with a noise-dependent
residual, not a claim of asymptotic convergence for the finite cosine
schedule used in our experiments.

The theorem is explicitly conditional.  RowNorm is positively aligned with
its own input momentum whenever the momentum has a nonzero row:
\begin{equation}
    \langle M,\RN(M)\rangle
    =\sum_{i=1}^{r}\|M_{i:}\|_2>0.
\end{equation}
However, this identity alone does not imply positive alignment with the true
objective gradient, because in general
\(\langle\nabla f(\theta_t),\RN(M_t)\rangle\) and
\(\langle M_t,\RN(M_t)\rangle\) are different quantities.  We therefore use Theorem~\ref{thm:app-conditional-descent} only as a
sufficient-condition result and do not claim that its alignment
assumption is verified throughout training.

\subsection{The blockwise role of the RowNorm multiplier}
\label{app:blockwise-gamma}

Conditioned on a fixed optimization history \(\mathcal F_t\) and the
current iterate \(\theta_t\), decompose the complete effective direction
on an off-refresh step as
\begin{equation}
    H_t(\gamma)=H_t^{(0)}+\gamma H_t^{(\mathrm{RN})},
    \label{eq:app-gamma-decomposition}
\end{equation}
where \(H_t^{(\mathrm{RN})}\) contains the unscaled RowNorm directions in the
blocks to which the method is applied, and \(H_t^{(0)}\) contains all
remaining contributions.  Define the conditional scalar quantities
\begin{align}
    A_{0,t}&:=\E[\langle\nabla f(\theta_t),H_t^{(0)}\rangle
                    \mid\mathcal{F}_t],\\
    A_{\mathrm{RN},t}&:=\E[\langle\nabla f(\theta_t),H_t^{(\mathrm{RN})}\rangle
                    \mid\mathcal{F}_t],\\
    B_{0,t}&:=\E[\|H_t^{(0)}\|_2^2\mid\mathcal{F}_t],\\
    B_{\mathrm{RN},t}&:=\E[\|H_t^{(\mathrm{RN})}\|_2^2\mid\mathcal{F}_t],\\
    B_{\times,t}&:=\E[\langle H_t^{(0)},H_t^{(\mathrm{RN})}\rangle
                    \mid\mathcal{F}_t].
\end{align}
Then the two quantities entering the smoothness bound are exactly
\begin{align}
    \E[\langle\nabla f(\theta_t),H_t(\gamma)\rangle
        \mid\mathcal{F}_t]
    &=A_{0,t}+\gamma A_{\mathrm{RN},t},
    \label{eq:app-gamma-alignment}\\
    \E[\|H_t(\gamma)\|_2^2\mid\mathcal{F}_t]
    &=B_{0,t}+2\gamma B_{\times,t}+\gamma^2B_{\mathrm{RN},t}.
    \label{eq:app-gamma-second-moment}
\end{align}
Consequently, the conditional one-step upper bound is
\begin{align}
    \E[f(\theta_{t+1})\mid\mathcal{F}_t]
    \leq f(\theta_t)
    &-\eta_t\bigl(A_{0,t}+\gamma A_{\mathrm{RN},t}\bigr)\nonumber\\
    &+\frac{L\eta_t^2}{2}
      \bigl(B_{0,t}+2\gamma B_{\times,t}
      +\gamma^2B_{\mathrm{RN},t}\bigr).
    \label{eq:app-gamma-descent}
\end{align}
Thus \(\gamma\) is admissible whenever the measured or assumed bounds make
the net descent term in~\eqref{eq:app-gamma-descent} positive. Equation \ref{eq:app-gamma-alignment} and \ref{eq:app-gamma-second-moment} also show
why scaling the RowNorm blocks does \emph{not} multiply the alignment and
second-moment constants of the complete direction by \(\gamma\) and
\(\gamma^2\), respectively.  The theory motivates branch-specific amplitude
calibration but does not determine the numerical value used in the
experiments. That value is selected by the development-set ablation in Appendix \ref{app:k_gamma_selection}.

\section{Complexity and Distributed Execution}
\label{app:complexity-systems}

This appendix refines the asymptotic discussion in the main text with an
implementation matched arithmetic model, a correctness proof for sharded
RowNorm, and explicit per-rank communication formulas.

\subsection{Per-matrix arithmetic}
\label{app:per-matrix-arithmetic}

Consider an oriented matrix \(X\in\R^{r\times c}\) with \(r\leq c\).  We
count one multiplication followed by one addition as two floating-point
operations.  The implemented quintic Newton--Schulz step can be written as
\begin{align}
    G_j &= X_jX_j^{\top},\\
    B_j &= b_jG_j+c_jG_j^2,\\
    X_{j+1} &= a_jX_j+B_jX_j,
    \qquad j=0,\ldots,J_{\mathrm{NS}}-1.
    \label{eq:app-ns-recurrence}
\end{align}
The three GEMMs in one iteration need, to leading order, \(2r^2c\), 
    \(2r^3\), 
    \(2r^2c\)
FLOPs, respectively.  Therefore
\begin{equation}
    C_{\mathrm{NS}J}(r,c)
    =J_{\mathrm{NS}}\bigl(4r^2c+2r^3\bigr)
      +O\!\left(J_{\mathrm{NS}}(r^2+rc)\right).
    \label{eq:app-ns-flops}
\end{equation}
For NS5,
\begin{equation}
    C_{\mathrm{NS5}}(r,c)
    =20r^2c+10r^3+O(r^2+rc).
    \label{eq:app-ns5-flops}
\end{equation}
The lower-order term includes scalar matrix combinations and the input
normalization.  Equation~\ref{eq:app-ns5-flops} is a FLOP model for the
recurrence in~\eqref{eq:app-ns-recurrence}; measured kernel time can differ
because GEMM efficiency depends on shape, dtype, and hardware.

RowNorm performs one sum-of-squares reduction per row, one clamped inverse
norm per row, and one rescaling per element.  Its 
arithmetic complexity
is
\begin{equation}
    C_{\mathrm{RN}}(r,c)=\Theta(rc).
    \label{eq:app-rownorm-flops}
\end{equation}
We do not assign a hardware-independent exact FLOP count to square root,
reciprocal, or fused reduction operations.  The relevant separation is
therefore
\begin{equation}
    C_{\mathrm{NS5}}(r,c)=\Theta(r^2c),
    \qquad
    C_{\mathrm{RN}}(r,c)=\Theta(rc).
\end{equation}

Let \(\mathcal{I}\) be the set of matrices governed by the periodic rule.
The period-averaged matrix-processing computational cost is
\begin{equation}
    \overline C_{\mathrm{mat}}(K)
    =\frac{1}{K}\sum_{i\in\mathcal{I}}C_{\mathrm{NS5}}(r_i,c_i)
    +\frac{K-1}{K}\sum_{i\in\mathcal{I}}C_{\mathrm{RN}}(r_i,c_i).
    \label{eq:app-periodic-compute}
\end{equation}
If \(C_{\mathrm{fixed}}\) denotes momentum maintenance, non-Muon parameter
updates, and all other optimizer work performed every step, then
\begin{equation}
    \overline C_{\mathrm{opt}}(K)
    =C_{\mathrm{fixed}}+\overline C_{\mathrm{mat}}(K).
    \label{eq:app-total-optimizer-compute}
\end{equation}
The periodic schedule removes exactly the fraction \(1-1/K\) of spectral
refresh events, but it does not remove the same fraction of total optimizer
FLOPs or wall-clock time because the terms in
\eqref{eq:app-total-optimizer-compute} remain.

\subsection{Correctness of sharded RowNorm}
\label{app:sharded-rownorm-correctness}

Suppose that the original matrix \(M\in\R^{m\times n}\) is sharded along its
first dimension across \(p\) ranks.  Rank \(q\) owns the row-index set
\(S_q\), and the sets \(S_1,\ldots,S_p\) form a disjoint partition of
\(\{1,\ldots,m\}\).

If \(m\leq n\), the orientation map leaves \(M\) unchanged.  Each oriented
row is therefore stored completely on one rank, and applying
\eqref{eq:app-rownorm-eps} to each local row is exactly the corresponding
slice of dense RowNorm.

If \(m>n\), the orientation map gives \(X=M^{\top}\in\R^{n\times m}\).
An oriented row is now split across ranks.  For each original column
\(j\in\{1,\ldots,n\}\), rank \(q\) computes
\begin{equation}
    s_j^{(q)}:=\sum_{i\in S_q}M_{ij}^{2}.
    \label{eq:app-local-sumsq}
\end{equation}
After a sum all-reduce, every rank forms
\begin{equation}
    \nu_j
    :=
    \max\left\{
        \sqrt{\sum_{q=1}^{p}s_j^{(q)}},
        \varepsilon
    \right\}.
    \label{eq:app-global-column-norm}
\end{equation}

Each rank then rescales its local entries as
\begin{equation}
\widehat{M}_{ij}^{(q)}
:=
\frac{M_{ij}}{\nu_j},
\qquad i\in S_q,\quad j=1,\ldots,n.
\end{equation}

\begin{proposition}[Exact-arithmetic equivalence]
\label{prop:app-sharded-rownorm}
Under the partition above, concatenating the local outputs
\(\widehat{M}^{(1)},\ldots,\widehat{M}^{(p)}\)
gives exactly
\[
\mathcal O^{-1}
\bigl(R_\epsilon(\mathcal O(M))\bigr).
\]
\end{proposition}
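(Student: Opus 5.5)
The plan is a direct entrywise verification, split by the orientation cases already set up above the statement. First, for \(m\leq n\), the orientation map is the identity, so \(\mathcal O^{-1}(R_\epsilon(\mathcal O(M)))=R_\epsilon(M)\). Row \(i\) of \(R_\epsilon(M)\) equals \(M_{i:}/\max\{\|M_{i:}\|_2,\varepsilon\}\), and this depends only on row \(i\). Since each \(i\) lies in exactly one \(S_q\), rank \(q\) holds the entire row and computes exactly this quantity. Because the \(S_q\) form a disjoint partition of \(\{1,\dots,m\}\), concatenating the local outputs in row order reproduces every row exactly once. Hence the concatenation equals the dense result.

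For \(m>n\), I will set \(X=M^\top\in\R^{n\times m}\) and compute the dense target entrywise. Oriented row \(j\) of \(X\) is the original column \(M_{:j}\), so
\begin{equation*}
R_\epsilon(X)_{ji}=\frac{M_{ij}}{\max\{\|M_{:j}\|_2,\varepsilon\}}.
\end{equation*}
Applying \(\mathcal O^{-1}\), which is transposition, gives \(\bigl(\mathcal O^{-1}(R_\epsilon(\mathcal O(M)))\bigr)_{ij}=M_{ij}/\max\{\|M_{:j}\|_2,\varepsilon\}\).

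The key step is then to identify the denominator with \(\nu_j\). Since the \(S_q\) are disjoint and cover all row indices, the sum splits as
\begin{equation*}
\sum_{q=1}^{p}s_j^{(q)}=\sum_{q=1}^p\sum_{i\in S_q}M_{ij}^2=\sum_{i=1}^m M_{ij}^2=\|M_{:j}\|_2^2.
\end{equation*}
Therefore \(\nu_j=\max\{\|M_{:j}\|_2,\varepsilon\}\). It follows that \(\widehat M^{(q)}_{ij}=M_{ij}/\nu_j\) matches the dense entry for every \(i\in S_q\) and every \(j\). Concatenating over \(q\) again covers each row index exactly once, which gives equality of the full matrices.

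No step here is a genuine obstacle; the argument is bookkeeping in exact arithmetic. The point needing care is the partition property. Disjointness and coverage are used twice: once to reassemble the global sum of squares from the local partial sums, and once to ensure that concatenation reproduces every entry exactly once. I will also remark that the all-reduce gives every rank the same \(\nu_j\), so all ranks use a consistent denominator. Finally, the clamp \(\max\{\cdot,\varepsilon\}\) is applied after the reduction, so it never acts on a partial sum. Exactness is claimed only in exact arithmetic, since floating-point summation order may differ from the dense computation.
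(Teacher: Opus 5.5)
Your proposal is correct and follows the same route as the paper's proof. The complete-row case is immediate because each oriented row lives on a single rank. In the split-row case the disjoint partition gives \(\sum_q s_j^{(q)}=\sum_{i=1}^m M_{ij}^2\), so \(\nu_j\) equals the dense clamped denominator and concatenation reproduces \(\mathcal O^{-1}(R_\epsilon(\mathcal O(M)))\). The only difference is cosmetic: you check the identity entrywise in the original coordinates, whereas the paper argues row-wise on \(X=M^\top\) and then maps back through \(\mathcal O^{-1}\).
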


\begin{proof}
The complete-row case follows immediately because each local denominator is
computed from all entries of its row.  In the split-row case,
\begin{equation}
    \sum_{q=1}^{p}s_j^{(q)}
    =\sum_{q=1}^{p}\sum_{i\in S_q}M_{ij}^{2}
    =\sum_{i=1}^{m}M_{ij}^{2}
    =\|X_{j:}\|_2^2.
\end{equation}
Hence \(\nu_j=\max\{\|X_{j:}\|_2,\varepsilon\}\).  Every local entry of
the oriented row is divided by the same dense denominator, so concatenating
the local pieces yields \(X_{j:}/\nu_j\) for every \(j\), which is exactly
\(\RN_{\varepsilon}(X)\).  Mapping back through \(\mathcal{O}^{-1}\)
completes the proof.
\end{proof}

The proposition is exact over real arithmetic.  In floating-point arithmetic,
the all-reduce may change the summation order and therefore introduce only
the usual reduction-order roundoff differences.

\subsection{Per-rank communication volume}
\label{app:communication-volume}

We report logical algorithmic payload and exclude headers, padding, collective
launch latency, and parameter communication already required by forward and
backward propagation.  Let \(b_M\) be the number of bytes per communicated
momentum element and \(b_S\) the number of bytes per communicated norm
statistic.  Under a bandwidth-optimal ring model, define
\begin{equation}
    \alpha_{\mathrm{AG}}(p):=\frac{p-1}{p},
    \qquad
    \alpha_{\mathrm{AR}}(p):=2\frac{p-1}{p}.
    \label{eq:app-collective-factors}
\end{equation}
The factor of two in \(\alpha_{\mathrm{AR}}(p)\) accounts for the reduce-scatter and all-gather phases of a
ring all-reduce.

For a refresh of \(M\in\R^{m\times n}\), the per-rank all-gather volume is
\begin{equation}
    V_{\mathrm{refresh}}(M)
    =\alpha_{\mathrm{AG}}(p)b_Mmn.
    \label{eq:app-refresh-bytes}
\end{equation}
Every rank executes NS5 on the gathered matrix and retains its local output
slice, so this execution path requires no subsequent scatter of the update.

On an off-refresh step, complete-row RowNorm requires no optimizer-specific
collective.  A tall matrix \(m>n\) requires an all-reduce of \(n\) statistics,
giving
\begin{equation}
    V_{\mathrm{RN}}(M)
    =
    \begin{cases}
        \alpha_{\mathrm{AR}}(p)b_S n, & m>n,\\
        0, & m\leq n.
    \end{cases}
    \label{eq:app-rownorm-bytes}
\end{equation}
For one tall matrix, the ratio is therefore
\begin{equation}
    \frac{V_{\mathrm{RN}}(M)}{V_{\mathrm{refresh}}(M)}
    =\frac{2b_S}{b_Mm}.
    \label{eq:app-single-matrix-byte-ratio}
\end{equation}

For a collection \(\mathcal{I}\) of periodically updated matrices, define
\begin{equation}
    S_{\mathrm{full}}:=\sum_{i\in\mathcal{I}}m_in_i,
    \qquad
    S_{\mathrm{tall}}:=\sum_{i\in\mathcal{I}:m_i>n_i}n_i.
\end{equation}
The period-averaged per-rank payload is
\begin{align}
    \overline V_{\mathrm{rank}}(K)
    &=\frac{1}{K}\alpha_{\mathrm{AG}}(p)b_MS_{\mathrm{full}}
      +\frac{K-1}{K}\alpha_{\mathrm{AR}}(p)b_SS_{\mathrm{tall}}.
    \label{eq:app-average-bytes}
\end{align}
Relative to vanilla Muon's every-step refresh payload,
\begin{equation}
    \frac{\overline V_{\mathrm{rank}}(K)}
    {\alpha_{\mathrm{AG}}(p)b_MS_{\mathrm{full}}}
    =
    \frac{1}{K}
    +\frac{K-1}{K}
      \frac{2b_SS_{\mathrm{tall}}}{b_MS_{\mathrm{full}}}.
    \label{eq:app-average-byte-ratio}
\end{equation}
When the reduced statistics are small relative to the full matrices, the
second term is negligible and the ratio approaches \(1/K\).  This is a
statement about logical payload, not a claim that collective time or total
step time decreases by the same factor.

\subsection{Idealized overlap model}
\label{app:overlap-model}

Bucketed execution changes exposed latency rather than the logical byte count
in~\eqref{eq:app-refresh-bytes}.  Consider \(B\) ordered refresh buckets.  Let
\(a_{\ell}\) be the all-gather time for bucket \(\ell\), and let
\(c_{\ell}\) be its reconstruction, NS5, and slicing time.  In an ideal
two-stage pipeline that overlaps the computation of bucket \(\ell\) with the
all-gather of bucket \(\ell+1\), the makespan is
\begin{equation}
    T_{\mathrm{pipe}}
    =a_1+
      \sum_{\ell=1}^{B-1}\max\{c_{\ell},a_{\ell+1}\}
      +c_B.
    \label{eq:app-pipeline-time}
\end{equation}
The corresponding exposed communication time beyond bucket computation is
\begin{equation}
    T_{\mathrm{comm,exposed}}
    =a_1+
      \sum_{\ell=1}^{B-1}\max\{0,a_{\ell+1}-c_{\ell}\}.
    \label{eq:app-exposed-communication}
\end{equation}
These equations describe an ideal dependency graph.  Actual traces may include
stream synchronization, launch overhead, contention, and imperfect kernel
concurrency; end-to-end claims are therefore based on measured profiler
windows spanning complete periods.

\subsection{Wall-clock break-even condition}
\label{app:wall-clock-break-even}

Let \(N_{\mathrm{base}}(q)\) and \(N_{\mathrm{per}}(q)\) be the numbers of
optimization steps required by vanilla Muon and Periodic Muon, respectively,
to reach a prespecified quality threshold \(q\).  Let their measured mean
step times be \(\tau_{\mathrm{base}}\) and \(\tau_{\mathrm{per}}\), using the
same number of GPUs. Periodic Muon improves time to quality exactly
when
\begin{equation}
    N_{\mathrm{per}}(q)\tau_{\mathrm{per}}
    <N_{\mathrm{base}}(q)\tau_{\mathrm{base}},
\end{equation}
or equivalently,
\begin{equation}
    \frac{N_{\mathrm{per}}(q)}{N_{\mathrm{base}}(q)}
    <
    \frac{\tau_{\mathrm{base}}}{\tau_{\mathrm{per}}}.
    \label{eq:app-time-to-quality-condition}
\end{equation}
Thus a lower per-step optimizer compute and communication cost is insufficient by itself: the realized
speedup must exceed any increase in the number of steps required to reach the
same quality.  If the GPU counts differ, the analogous GPU-hour
comparison multiplies each side by its respective accelerator count.

\section{Algorithms}
\label{app:full-optimizer}

In this section, we provide the Periodic Row-wise Muon algorithm  \ref{alg:periodic-muon} and the system-level algorithm \ref{alg:distributed-periodic-muon}. 

\begin{algorithm}[t]
\caption{Periodic Row-wise Muon}
\label{alg:periodic-muon}
\small
\begin{minipage}{0.98\linewidth}
\textbf{Input:} two-dimensional Muon parameters
\(\mathcal W_{\mathrm M}\), remaining parameters
\(\mathcal W_{\mathrm A}\), period \(K\), and RowNorm multiplier
\(\gamma\).
\par\smallskip
\ResetAlgorithmLines
\AlgLine{\textbf{for} \(t=0,\ldots,T-1\) \textbf{do}}
\AlgLine[1]{Compute stochastic gradients and update every momentum
\(M_t\) as in Section~2.}
\AlgLine[1]{Set
\(\rho_t\leftarrow\mathbb{I}[t\bmod K=0]\).}
\AlgLine[1]{\textbf{for each}
\(W_t\in\mathcal W_{\mathrm M}\) \textbf{do}}
\AlgLine[2]{\(\widetilde M_t\leftarrow\Orient(M_t)\).}
\AlgLine[2]{\textbf{if} \(\rho_t=1\) \textbf{then}
\(\widetilde D_t\leftarrow\NSfive(\widetilde M_t)\).}
\AlgLine[2]{\textbf{else}
\(\widetilde D_t\leftarrow
\gamma\RowNorm_\epsilon(\widetilde M_t)\).}
\AlgLine[2]{\(D_t\leftarrow
s(W_t)\Orient^{-1}(\widetilde D_t)\).}
\AlgLine[2]{\(W_{t+1}\leftarrow
(1-\eta_t\lambda)W_t-\eta_tD_t\).}
\AlgLine[1]{\textbf{end for}}
\AlgLine[1]{Update \(\mathcal W_{\mathrm A}\) with the same AdamW rule
as the baseline.}
\AlgLine{\textbf{end for}}
\end{minipage}
\end{algorithm}

\begin{algorithm}[t]
\caption{Distributed Periodic Muon}
\label{alg:distributed-periodic-muon}
\small
\begin{minipage}{0.98\linewidth}
\textbf{Input:} momentum shards \(\{M_t^{(p)}\}\) on rank \(p\),
refresh flag \(\rho_t\), and communication buckets
\(\{\mathcal B_\ell\}\).
\par\smallskip
\ResetAlgorithmLines
\AlgLine{\textbf{if} \(\rho_t=1\) \textbf{then}
\hfill\textit{// NS5 refresh path}}
\AlgLine[1]{Asynchronously all-gather the first momentum bucket.}
\AlgLine[1]{\textbf{for each} bucket
\(\mathcal B_\ell\) \textbf{do}}
\AlgLine[2]{Wait for the current bucket and launch the next
asynchronous all-gather.}
\AlgLine[2]{Reconstruct full momenta, execute NS5, and slice the results
back to local shards.}
\AlgLine[2]{Release the full-matrix buffers of the current bucket.}
\AlgLine[1]{\textbf{end for}}
\AlgLine{\textbf{else}
\hfill\textit{// sharded RowNorm path}}
\AlgLine[1]{Compute local column-squared statistics for all tall
matrices.}
\AlgLine[1]{Pack the statistics and launch bucketed asynchronous sum
all-reduces.}
\AlgLine[1]{While communication is in flight, compute and finish local
RowNorm updates for the remaining matrices.}
\AlgLine[1]{Wait for all-reduce completion; take the square root and then apply
\(\operatorname{clamp}_{\min}(\epsilon)\).}
\AlgLine[1]{Use the global column norms to finish RowNorm updates for
tall matrices.}
\AlgLine{\textbf{end if}}
\AlgLine{Apply \(\gamma\) to the off-refresh RowNorm outputs only, and fuse
shape scaling with the local parameter-shard updates.}
\end{minipage}
\end{algorithm}

\section{Detailed Experimental Setup}
\label{app:experimental_setup}

\subsection{GPIC Dataset}
\label{app:gpic_dataset}

GPIC (Giant Permissive Image Corpus)~\citep{chandrasegaran2026gpic} is an open image--text dataset designed for large-scale visual-generation research and contains approximately 28 trillion pixels. Its full split comprises 100M training examples, 200K validation examples, and 1M test examples. We train on GPIC-Full, use the validation set to monitor training, and perform generation evaluation on the held-out test set.

GPIC applies vision-language-model-based quality and safety filtering and removes duplicate and near-duplicate images using SSCD visual features. After filtering and deduplication, approximately 101.3M images remain and are divided into 100M training, 200K validation, and 1M test examples. Qwen3-VL-4B-Instruct~\citep{bai2025qwen3} generates captions at multiple levels of detail rather than relying on potentially noisy or missing web metadata. The training, validation, and test splits retain similar source and caption distributions.

\subsection{Model Configurations}
\label{app:model_configurations}

We use a latent space, dual stream MMDiT architecture~\citep{esser2024scaling}. A frozen FLUX.1-schnell~\citep{labs2025flux} VAE encodes images into the latent space, while frozen CLIP-L, CLIP-G~\citep{radford2021learning}, and T5-XXL~\citep{raffel2020exploring} encoders provide text conditioning. The MMDiT backbone includes joint image--text attention, RoPE positional encoding, QK normalization, SwiGLU MLPs, and adaptive normalization. All four models use the same components and connectivity, and differ only in hidden dimension, depth, and number of attention heads.

In the Muon and Periodic Row-wise Muon runs, the corresponding Muon update rule is applied to two-dimensional weight matrices within the Transformer blocks. Biases, normalization parameters, scalar parameters, and parameters outside the Transformer blocks are updated by AdamW.

\subsection{Optimizer and Training Hyperparameters}
\label{app:optimizer_hyperparameters}

All methods use decoupled weight decay. The AdamW baseline applies AdamW to all parameters. Muon and Periodic Row-wise Muon apply matrix-valued updates to two-dimensional hidden-layer weights and maintain a separate AdamW parameter group for all remaining parameters.

All Muon experiments use five Newton--Schulz iterations with fixed coefficients $(a,b,c)=(3.4445,-4.7750,2.0315)$.

At a given model scale, the three optimizers use the same model, training length, global batch size, and learning-rate schedule, and therefore process the same number of training examples. Because their optimizer computation and communication costs differ, they are not constrained to use the same wall-clock time or total compute.

\subsection{Generation Evaluation}
\label{app:generation_evaluation}

We evaluate a checkpoint every 5{,}000 training steps, covering the complete trajectory from 5{,}000 to 60{,}000 steps.
We select a fixed set of 50{,}000 prompts from the GPIC test set and generate 50{,}000 images at $512\times512$ resolution. All models and optimizers use the same sampling configuration with a fixed classifier-free guidance scale of 5.0.

\begin{table*}[t]
    \centering
    \caption{Model configurations. Muon-routed parameters denotes the fraction of trainable parameters updated by Muon in the Muon-based runs.}
    \label{tab:model_configurations}
    \small
    \setlength{\tabcolsep}{6pt}
    \begin{tabular}{@{}lrrrrrr@{}}
        \toprule
        Model & Total params. & Trainable params. & Hidden dim. & Blocks & Heads & Muon-routed \\
        \midrule
        1.3B & 1{,}255{,}586{,}099  & 1{,}255{,}585{,}795  & 1{,}216 & 19 & 19 & 98.69\% \\
        4B   & 3{,}950{,}236{,}099  & 3{,}950{,}235{,}715  & 1{,}920 & 24 & 30 & 99.20\% \\
        9B   & 9{,}374{,}837{,}315  & 9{,}374{,}836{,}803  & 2{,}560 & 32 & 40 & 99.48\% \\
        15B  & 14{,}848{,}237{,}251 & 14{,}848{,}236{,}611 & 2{,}880 & 40 & 45 & 99.60\% \\
        \bottomrule
    \end{tabular}
\end{table*}


\begin{table}[t]
    \centering
    \caption{
        Optimizer hyperparameters.
        ``Aux. AdamW'' denotes the auxiliary AdamW parameter group used in
        the Muon-based runs.
        All methods use weight decay $0.01$ and AdamW-updated parameter groups
        use $\epsilon=10^{-8}$.
    }
    \label{tab:optimizer_hyperparameters}

    \small
    \renewcommand{\arraystretch}{1.12}
    \setlength{\tabcolsep}{4pt}

    \begin{tabular*}{\linewidth}{
        @{\extracolsep{\fill}}
        l
        cccc
        @{}
    }
        \toprule
        & \multicolumn{4}{c}{Main learning rate} \\
        \cmidrule(lr){2-5}
        Method
        & 1.3B
        & 4B
        & 9B
        & 15B \\
        \midrule

        AdamW
        & $3.0{\times}10^{-4}$
        & $2.0{\times}10^{-4}$
        & $1.4{\times}10^{-4}$
        & $1.2{\times}10^{-4}$ \\

        Muon
        & $5.0{\times}10^{-3}$
        & $3.0{\times}10^{-3}$
        & $2.4{\times}10^{-3}$
        & $2.0{\times}10^{-3}$ \\

        Periodic Row-wise Muon
        & $5.0{\times}10^{-3}$
        & $3.0{\times}10^{-3}$
        & $2.4{\times}10^{-3}$
        & $2.0{\times}10^{-3}$ \\

        \bottomrule
    \end{tabular*}

    \vspace{0.8em}

    \begin{tabularx}{\linewidth}{
    @{}
    l
    c
    >{\raggedright\arraybackslash}p{0.22\linewidth}
    >{\raggedright\arraybackslash}X
    @{}
}
    \toprule
    Method
    & \makecell{Main momentum\\or betas}
    & Aux. AdamW
    & Orthogonalization / schedule \\
    \midrule

    AdamW
    & $(0.9,0.95)$
    & --
    & -- \\

    Muon
    & $0.95$
    & LR $=1.0{\times}10^{-4}$;\newline
      $\beta=(0.9,0.95)$
    & NS5 at every step \\

    Periodic Row-wise Muon
    & $0.95$
    & LR $=1.0{\times}10^{-4}$;\newline
      $\beta=(0.9,0.95)$
    & NS5 refresh every $K=3$ steps;\newline
      RowNorm otherwise ($\gamma=0.15$) \\

    \bottomrule
\end{tabularx}
\end{table}

\begin{table}[t]
    \centering
    \caption{Training settings shared by all methods.}
    \label{tab:shared_training_settings}
    \footnotesize
    \renewcommand{\arraystretch}{1.08}
    \setlength{\tabcolsep}{6pt}
    \begin{tabular}{@{}ll@{}}
        \toprule
        Setting & Value \\
        \midrule
        Image resolution          & $512 \times 512$ \\
        Training length           & 60{,}000 steps \\
        Local / global batch size & 16 / 4{,}096 \\
        Learning-rate schedule    & 2k warmup, then cosine decay \\
        Hardware                  & 32 nodes, 256 NVIDIA H100 GPUs \\
        Distributed training      & Pytorch FSDP2 with activation checkpointing \\
        \bottomrule
    \end{tabular}
\end{table}

We evaluate generated images using the following metrics:
\begin{itemize}
    \item \textbf{FD-DINO}v2~\citep{stein2023exposing} computes the Fr\'{e}chet distance between generated and real-image distributions in DINOv2 feature space.
    \item \textbf{FID}~\citep{heusel2017gans} computes the Fr\'{e}chet distance between generated and real-image distributions in Inception feature space.
    \item \textbf{Maximum Mean Discrepancy (MMD)} provides a non-parametric measure of the discrepancy between generated and real feature distributions.
    \item \textbf{Precision and Density} measure how closely generated samples align with the real-data manifold and primarily characterize sample fidelity.
    \item \textbf{Recall and Coverage} measure how well the generated distribution covers the real data distribution and characterize generation diversity.
    \item \textbf{HPSv2.1}~\citep{wu2023human} measures text--image alignment and consistency with human preferences.
    \item \textbf{GenEval2}~\citep{kamath2025geneval} evaluates the correctness of generated objects, attributes, relations, and compositional semantics.
\end{itemize}

Unless otherwise specified, all main comparisons and ablations use the same generation and evaluation protocol.

\section{Additional Experimental Results}
\label{app:additional_experiments}

This section provides additional experimental results supporting the main
comparisons. We first describe the selection of the refresh period \(K\) and
the RowNorm scaling factor \(\gamma\) for Periodic Row-wise Muon.
We then provide
the complete training loss trajectories, checkpoint-level generation
evaluations, and additional profiler traces. 


\subsection{Selection of \(K\) and \(\gamma\)}
\label{app:k_gamma_selection}

Periodic Row-wise Muon introduces two primary hyperparameters: the refresh
period \(K\), which controls the frequency of full NS5 spectral updates, and
the RowNorm scaling factor \(\gamma\), which controls the relative update
magnitude of the RowNorm branch on non-refresh steps. To avoid tuning these
hyperparameters separately at each model scale, we select \(K\) and
\(\gamma\) once in a smaller development setting and directly transfer the
same configuration to all 1.3B--15B models in the main experiments without
additional scale-specific tuning.

Specifically, we perform the hyperparameter selection experiments using the
1.3B model for 30k training steps on 2 H100 nodes, corresponding to 16 H100
GPUs in total, with a local batch size of 16. We otherwise retain the same
model, optimizer hyperparameters, learning rate schedule, data pipeline, and
training recipe used in the main experiments. We evaluate
\(K\in\{2,3,4\}\) and
\(\gamma\in\{0.10,0.15,0.25,0.35\}\). To reduce sensitivity to noise at any
single checkpoint, we use the mean FD-DINO over the 20k, 25k, and 30k
checkpoints as the selection criterion.

Figure~\ref{fig:k_gamma_selection} summarizes the results. For \(K=3\) cases,
\(\gamma=0.15\) achieves the lowest late-stage mean FD-DINO among the tested
scales. We therefore use \(\gamma=0.15\) when comparing different refresh
periods. Increasing \(K\) from 2 to 3 reduces the number of NS5 refreshes by
33\% while increasing the best mean FD-DINO by only 2.8\%. In contrast,
increasing \(K\) from 3 to 4 provides a further 25\% reduction in NS5
refreshes but increases mean FD-DINO by 8.2\%. We therefore select
\(K=3\) and \(\gamma=0.15\) as the default configuration, which provides a
favorable trade-off between generation quality and spectral-update compute and communication cost.

Importantly, this selection procedure does not retune \(K\) or \(\gamma\)
on the larger models. The same configuration is transferred directly to all
model scales in the main experiments, so the observed scaling behavior does
not result from scale-specific hyperparameter tuning.

\begin{figure*}[t]
    \centering
    \includegraphics[
        width=\textwidth
    ]{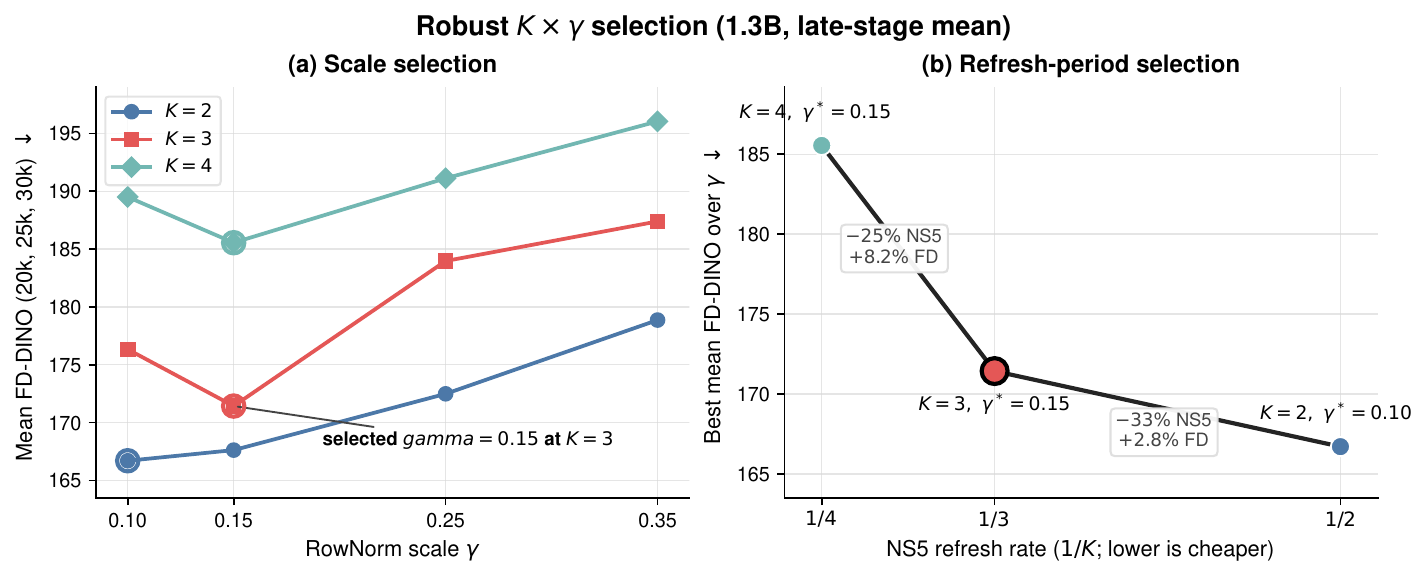}
    \caption{
        Selection of \(K\) and \(\gamma\).
        (a) Late-stage FD-DINO, averaged over the 20k, 25k, and
        30k checkpoints, across different \(K\) and \(\gamma\).
        (b) The best mean FD-DINO over \(\gamma\) at each refresh
        period. Increasing \(K\) from 2 to 3 reduces the number of
        NS5 refreshes by \(33\%\) with only a \(2.8\%\) increase in
        FD-DINO, whereas increasing \(K\) from 3 to 4 saves a further
        \(25\%\) but increases FD-DINO by \(8.2\%\).
        We therefore select \(K=3\) and \(\gamma=0.15\).
    }
    \label{fig:k_gamma_selection}
\end{figure*}

\subsection{Training-Loss Trajectories Across Model Scales}
\label{app:training_loss}

Figure~\ref{fig:all_training_curve} shows the complete training-loss
trajectories of AdamW, Muon, and Periodic Row-wise Muon over 60k
optimization steps for the 1.3B, 4B, 9B, and 15B models. These results
complement the validation loss comparisons in the main text by providing a
higher resolution view of the optimization dynamics throughout training.

All three optimizers exhibit stable decreasing training loss across all
model scales, with no evidence of training divergence. Muon generally
achieves lower training loss throughout the main training regime, while
Periodic Row-wise Muon closely tracks the Muon trajectory. This indicates
that replacing two out of every three NS5 updates with the substantially
cheaper RowNorm update does not destabilize optimization. As model size
increases from 1.3B to 15B, the overall training loss level also decreases,
consistent with the validation loss scaling behavior observed in the main
experiments.

Because minibatch training loss exhibits substantial stochastic variation,
we do not use individual training loss values as proxies for generation
quality. Generation performance is instead evaluated using FD-DINO, FID,
and the additional generation metrics reported in the main text and in
Appendix~\ref{app:checkpoint_evals}. The purpose of
Figure~\ref{fig:all_training_curve} is therefore to verify that
Periodic Row-wise Muon remains stable over the complete 60k-step
optimization trajectory rather than only at the sparse generation evaluation
checkpoints.

\begin{figure*}[t]
    \centering
    \includegraphics[
        width=\textwidth,
        trim={0 76bp 176bp 0},
        clip
    ]{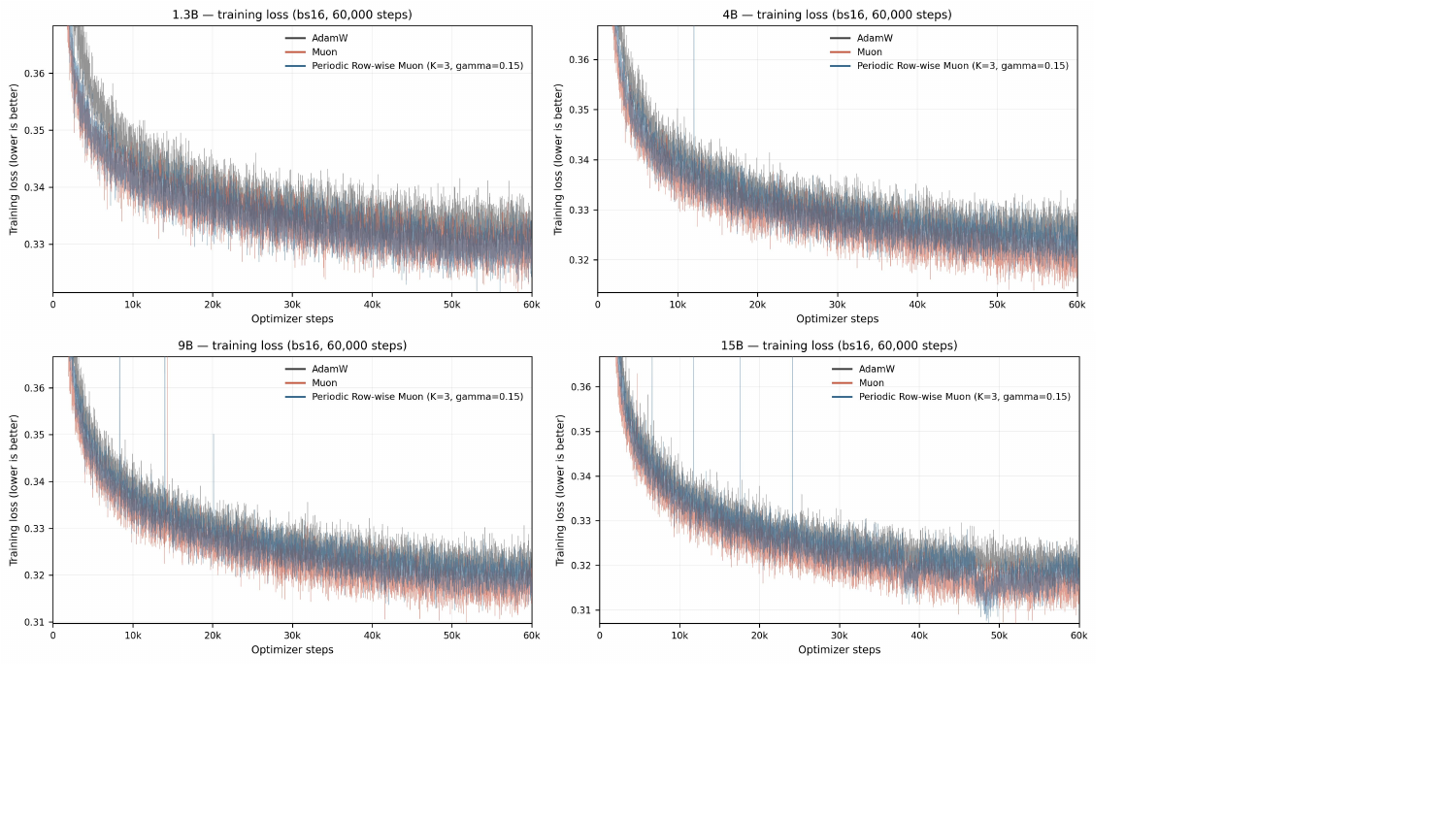}
    \caption{Training loss curves across model scales over 60,000 optimizer steps.}
    \label{fig:all_training_curve}
\end{figure*}

\subsection{Active Time Efficiency over AdamW}
\label{app:adamw_periodic_active_time}

\begin{figure*}[t]
    \centering
    \includegraphics[width=\textwidth]{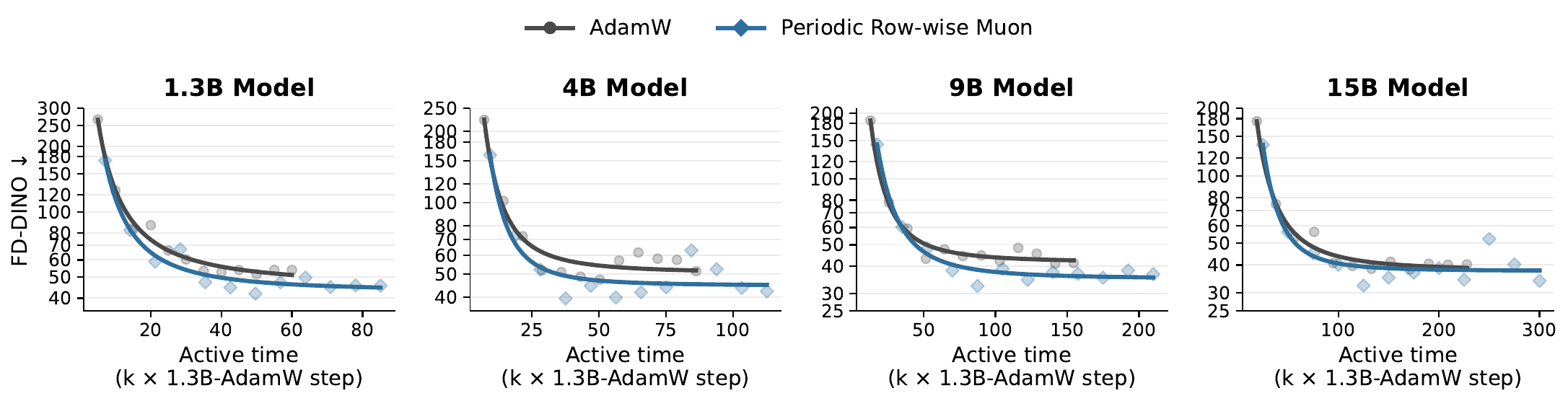}
    \caption{
    Generation quality for AdamW and Periodic Row-wise Muon.
    }
    \label{fig:adamw_periodic_active_time}
\end{figure*}

Figure~\ref{fig:adamw_periodic_active_time} directly compares Periodic
Row-wise Muon with AdamW under active training time. Across all model sizes,
Periodic Row-wise Muon achieves lower FD-DINO throughout most of the
overlapping time range, indicating better generation quality for a comparable
training time budget. It also consistently attains lower FD-DINO at the end of
training. Thus, the advantage over AdamW holds in both training time efficiency
and the final generation quality reached by the model.

\subsection{Complete Checkpoint Evaluation Results}
\label{app:checkpoint_evals}

The main text reports the most important final checkpoint metrics and the
best observed FD-DINO to keep the presentation compact. Here we provide the
complete checkpoint-level generation evaluations for all 4 model scales
in Tables~\ref{tab:all_checkpoint_metrics_1b}--\ref{tab:all_checkpoint_metrics_10b}.

For each model, we evaluate checkpoints every 5k steps from 5k to 60k using
the same evaluation protocol for AdamW, Muon, and Periodic Row-wise Muon.
At every checkpoint, we report FD-DINO, FID, MMD-DINO, Precision, Recall,
Coverage, Density, HPSv2, and both the arithmetic and geometric means of
GenEval2. These tables therefore complement the fidelity metrics emphasized
in the main text with the complete evolution of diversity, alignment, and
compositionality throughout training.

The full trajectories show that both Muon and Periodic Row-wise Muon achieve
substantial improvements over AdamW at multiple stages of training, while
the relative ordering of the two Muon variants exhibits some checkpoint-level
variation. In particular, the checkpoint that optimizes one generation
metric does not necessarily optimize the others. We therefore report the
complete metric profile at the final checkpoint in the main text and treat
the best-observed FD-DINO separately, rather than constructing an artificial
``best'' model by selecting different checkpoints for different metrics.

These trajectories also provide additional context for the generation quality
frontiers in the main text. Different optimizers enter their best quality
regimes at different points in training, and Periodic Row-wise Muon typically
reaches a generation quality regime comparable to vanilla Muon with
substantially less normalized active training time.

\subsection{Additional Profiler Traces Across Model Scales}
\label{app:additional_profiler}

The main text uses the 15B model to illustrate the systems behavior of
Periodic Row-wise Muon. Figure~\ref{fig:profiler_additional}
provides the corresponding profiler traces for the 1.3B, 4B, and 9B
models, allowing us to verify whether the same execution pattern persists
across model scales.

All profiler traces use the same global time normalization as the systems
measurements in the main text, meaning that the mean step time of the 1.3B AdamW run is
defined as one time unit. The traces separately visualize the non-optimizer
phase, optimizer-specific CPU phases, GPU computation, and communication activity for vanilla Muon and Periodic Row-wise Muon.

A consistent execution pattern is observed at all three scales. Vanilla
Muon performs a full NS5 update at every optimization step, resulting in a
long spectral computation phase together with the corresponding optimizer
communication. Periodic Row-wise Muon instead retains only one refresh step
per \(K=3\) period, while the two intervening steps use the much shorter
RowNorm path. On refresh steps, bucketed all-gather overlaps communication
for upcoming matrices with NS5 computation on matrices that are already
available. On RowNorm steps, the much smaller communication of norm
statistics is similarly overlapped with local computation.

Over the three consecutive steps shown in
Figure~\ref{fig:profiler_additional}, the normalized time
decreases from 5.59 for vanilla Muon to 4.20 for Periodic Row-wise Muon on
the 1.3B model, from 7.26 to 5.68 on the 4B model, and from 12.47 to 10.26
on the 9B model. The 15B trace presented in the main text exhibits the same
qualitative behavior. The consistency of these profiler traces across model
scales indicates that the systems gains do not arise from a single model
shape, but from the combined effects of periodic spectral refresh, sharded
RowNorm, and communication--computation overlap.

\begin{table*}[p]
  \centering
  \caption{All checkpoint evaluation metrics for the 1.3B model.}
  \label{tab:all_checkpoint_metrics_1b}
  \scriptsize
  \setlength{\tabcolsep}{2.4pt}
  \renewcommand{\arraystretch}{1.02}
  \resizebox{\textwidth}{!}{%
  \begin{tabular}{@{}ll
    D{.}{.}{3.2} D{.}{.}{2.2} D{.}{.}{1.3}
    D{.}{.}{1.3} D{.}{.}{1.3} D{.}{.}{1.3} D{.}{.}{1.3}
    D{.}{.}{2.2}
    D{.}{.}{2.2} D{.}{.}{2.2}@{}}
    \toprule
    \multirow{2}{*}{Optimizer} & \multirow{2}{*}{Step} &
    \multicolumn{3}{c}{Fidelity} & \multicolumn{4}{c}{Diversity} &
    \multicolumn{1}{c}{Alignment} & \multicolumn{2}{c}{Compositionality} \\
    \cmidrule(lr){3-5}\cmidrule(lr){6-9}\cmidrule(lr){10-10}\cmidrule(lr){11-12}
    & & \multicolumn{1}{c}{FD-DINO$\downarrow$} & \multicolumn{1}{c}{FID$\downarrow$} & \multicolumn{1}{c}{MMD-DINO$\downarrow$} &
    \multicolumn{1}{c}{Precision$\uparrow$} & \multicolumn{1}{c}{Recall$\uparrow$} & \multicolumn{1}{c}{Coverage$\uparrow$} & \multicolumn{1}{c}{Density$\uparrow$} &
    \multicolumn{1}{c}{HPSv2$\uparrow$} & \multicolumn{1}{c}{GenEval2 AM$\uparrow$} & \multicolumn{1}{c}{GenEval2 GM$\uparrow$} \\
    \midrule
    \multirow{12}{*}{AdamW} & 5k & 266.40 & 9.94 & 0.258 & 0.861 & 0.361 & 0.657 & 0.900 & 15.79 & 17.19 & 1.35 \\
     & 10k & 125.51 & 7.57 & 0.094 & 0.908 & 0.681 & 0.842 & 0.947 & 15.93 & 22.24 & 2.77 \\
     & 15k & 82.93 & 6.62 & 0.051 & 0.915 & 0.789 & 0.880 & 0.976 & 18.09 & 28.53 & 4.00 \\
     & 20k & 86.84 & 7.94 & 0.055 & 0.925 & 0.812 & 0.890 & 1.107 & 17.71 & 29.83 & 4.19 \\
     & 25k & 66.30 & 6.58 & 0.037 & 0.929 & 0.849 & 0.921 & 0.999 & 17.55 & 30.96 & 4.75 \\
     & 30k & 60.34 & 6.89 & 0.034 & 0.941 & 0.859 & 0.929 & 1.065 & 19.56 & 36.92 & 6.71 \\
     & 35k & 53.43 & 6.33 & 0.029 & 0.938 & 0.874 & 0.934 & 1.019 & 19.92 & 39.37 & 8.61 \\
     & 40k & 52.60 & 6.30 & 0.030 & 0.947 & 0.879 & 0.947 & 1.018 & 19.82 & 37.60 & 7.16 \\
     & 45k & 53.92 & 6.53 & 0.030 & 0.943 & 0.881 & 0.940 & 0.998 & 19.72 & 37.97 & 7.57 \\
     & 50k & 51.56 & 6.68 & 0.028 & 0.947 & 0.887 & 0.940 & 1.019 & 20.61 & 38.16 & 8.01 \\
     & 55k & 53.97 & 6.61 & 0.031 & 0.941 & 0.886 & 0.941 & 1.018 & 20.46 & 37.64 & 6.01 \\
     & 60k & 53.93 & 6.71 & 0.031 & 0.941 & 0.887 & 0.940 & 0.966 & 20.53 & 38.66 & 7.10 \\
    \midrule
    \multirow{12}{*}{Muon} & 5k & 152.03 & 7.68 & 0.126 & 0.882 & 0.633 & 0.778 & 0.882 & 16.64 & 20.28 & 2.23 \\
     & 10k & 73.35 & 6.26 & 0.040 & 0.921 & 0.814 & 0.889 & 0.939 & 18.79 & 30.57 & 4.82 \\
     & 15k & 51.56 & 5.73 & 0.024 & 0.943 & 0.866 & 0.920 & 0.982 & 19.69 & 38.33 & 7.45 \\
     & 20k & 48.06 & 6.16 & 0.021 & 0.948 & 0.870 & 0.930 & 0.993 & 19.79 & 43.63 & 9.64 \\
     & 25k & 46.12 & 6.50 & 0.022 & 0.947 & 0.884 & 0.932 & 0.980 & 20.43 & 40.00 & 7.65 \\
     & 30k & 52.86 & 6.84 & 0.029 & 0.942 & 0.885 & 0.938 & 0.975 & 20.66 & 45.20 & 8.05 \\
     & 35k & 46.65 & 6.26 & 0.022 & 0.944 & 0.896 & 0.943 & 0.991 & 21.14 & 46.57 & 10.77 \\
     & 40k & 41.73 & 6.01 & 0.021 & 0.949 & 0.901 & 0.947 & 0.984 & 20.91 & 46.73 & 11.36 \\
     & 45k & 48.26 & 6.56 & 0.027 & 0.943 & 0.897 & 0.945 & 0.991 & 20.29 & 36.27 & 5.45 \\
     & 50k & 50.41 & 6.74 & 0.030 & 0.945 & 0.892 & 0.954 & 0.994 & 20.41 & 45.32 & 8.98 \\
     & 55k & 47.92 & 6.65 & 0.028 & 0.945 & 0.900 & 0.950 & 0.985 & 20.81 & 44.91 & 9.22 \\
     & 60k & 46.08 & 6.54 & 0.027 & 0.949 & 0.907 & 0.949 & 0.992 & 20.84 & 44.72 & 9.38 \\
    \midrule
    \multirow{12}{*}{\shortstack[l]{Periodic Row-wise\\Muon}} & 5k & 172.46 & 8.23 & 0.145 & 0.885 & 0.569 & 0.766 & 0.921 & 16.45 & 19.46 & 1.67 \\
     & 10k & 82.29 & 6.74 & 0.049 & 0.923 & 0.808 & 0.880 & 0.918 & 17.57 & 28.50 & 4.80 \\
     & 15k & 58.96 & 6.06 & 0.028 & 0.928 & 0.858 & 0.916 & 0.969 & 18.45 & 36.49 & 6.03 \\
     & 20k & 67.22 & 6.97 & 0.039 & 0.939 & 0.856 & 0.916 & 1.000 & 18.46 & 34.74 & 5.83 \\
     & 25k & 47.37 & 6.18 & 0.021 & 0.938 & 0.876 & 0.938 & 0.997 & 19.36 & 36.95 & 7.24 \\
     & 30k & 44.77 & 6.11 & 0.021 & 0.940 & 0.891 & 0.932 & 0.936 & 21.10 & 42.73 & 8.92 \\
     & 35k & 41.91 & 5.53 & 0.019 & 0.944 & 0.902 & 0.944 & 0.981 & 20.73 & 47.71 & 10.53 \\
     & 40k & 47.29 & 6.63 & 0.028 & 0.946 & 0.887 & 0.938 & 0.971 & 21.40 & 45.68 & 11.36 \\
     & 45k & 49.77 & 6.37 & 0.030 & 0.947 & 0.894 & 0.944 & 0.981 & 20.18 & 45.97 & 10.21 \\
     & 50k & 45.01 & 6.13 & 0.026 & 0.951 & 0.893 & 0.949 & 0.996 & 20.31 & 47.40 & 11.25 \\
     & 55k & 45.75 & 6.40 & 0.027 & 0.949 & 0.896 & 0.951 & 1.005 & 20.69 & 47.93 & 11.71 \\
     & 60k & 45.65 & 6.29 & 0.027 & 0.952 & 0.906 & 0.949 & 0.976 & 20.36 & 45.89 & 9.75 \\
    \bottomrule
  \end{tabular}%
  }
  \vspace{2pt}
\end{table*}

\clearpage

\begin{table*}[p]
  \centering
  \caption{All checkpoint evaluation metrics for the 4B model.}
  \label{tab:all_checkpoint_metrics_3b}
  \scriptsize
  \setlength{\tabcolsep}{2.4pt}
  \renewcommand{\arraystretch}{1.02}
  \resizebox{\textwidth}{!}{%
  \begin{tabular}{@{}ll
    D{.}{.}{3.2} D{.}{.}{2.2} D{.}{.}{1.3}
    D{.}{.}{1.3} D{.}{.}{1.3} D{.}{.}{1.3} D{.}{.}{1.3}
    D{.}{.}{2.2}
    D{.}{.}{2.2} D{.}{.}{2.2}@{}}
    \toprule
    \multirow{2}{*}{Optimizer} & \multirow{2}{*}{Step} &
    \multicolumn{3}{c}{Fidelity} & \multicolumn{4}{c}{Diversity} &
    \multicolumn{1}{c}{Alignment} & \multicolumn{2}{c}{Compositionality} \\
    \cmidrule(lr){3-5}\cmidrule(lr){6-9}\cmidrule(lr){10-10}\cmidrule(lr){11-12}
    & & \multicolumn{1}{c}{FD-DINO$\downarrow$} & \multicolumn{1}{c}{FID$\downarrow$} & \multicolumn{1}{c}{MMD-DINO$\downarrow$} &
    \multicolumn{1}{c}{Precision$\uparrow$} & \multicolumn{1}{c}{Recall$\uparrow$} & \multicolumn{1}{c}{Coverage$\uparrow$} & \multicolumn{1}{c}{Density$\uparrow$} &
    \multicolumn{1}{c}{HPSv2$\uparrow$} & \multicolumn{1}{c}{GenEval2 AM$\uparrow$} & \multicolumn{1}{c}{GenEval2 GM$\uparrow$} \\
    \midrule
    \multirow{12}{*}{AdamW} & 5k & 223.73 & 9.17 & 0.199 & 0.870 & 0.453 & 0.700 & 0.901 & 15.90 & 18.62 & 1.78 \\
     & 10k & 101.76 & 8.42 & 0.066 & 0.908 & 0.746 & 0.857 & 0.950 & 18.17 & 22.01 & 2.26 \\
     & 15k & 72.16 & 7.67 & 0.036 & 0.922 & 0.834 & 0.885 & 0.958 & 18.31 & 34.18 & 6.51 \\
     & 20k & 51.87 & 6.87 & 0.023 & 0.941 & 0.866 & 0.926 & 0.998 & 20.73 & 37.09 & 5.76 \\
     & 25k & 50.94 & 6.39 & 0.023 & 0.938 & 0.879 & 0.930 & 1.007 & 19.36 & 32.85 & 5.71 \\
     & 30k & 48.71 & 6.48 & 0.022 & 0.940 & 0.888 & 0.936 & 1.026 & 20.24 & 38.09 & 7.74 \\
     & 35k & 47.28 & 6.80 & 0.024 & 0.946 & 0.888 & 0.944 & 1.014 & 21.60 & 40.01 & 7.50 \\
     & 40k & 56.97 & 7.90 & 0.029 & 0.933 & 0.899 & 0.937 & 0.972 & 20.63 & 41.74 & 8.89 \\
     & 45k & 61.61 & 8.72 & 0.033 & 0.944 & 0.897 & 0.935 & 1.020 & 21.31 & 45.90 & 10.68 \\
     & 50k & 58.04 & 8.28 & 0.030 & 0.939 & 0.899 & 0.941 & 0.994 & 20.72 & 44.26 & 9.72 \\
     & 55k & 57.35 & 8.01 & 0.030 & 0.937 & 0.902 & 0.946 & 0.982 & 20.80 & 44.03 & 9.70 \\
     & 60k & 51.39 & 7.35 & 0.027 & 0.937 & 0.899 & 0.939 & 0.990 & 21.40 & 45.70 & 9.13 \\
    \midrule
    \multirow{12}{*}{Muon}
    & 5k  & 149.32 & 7.82 & 0.119 & 0.888 & 0.631 & 0.804 & 0.894 & 16.73 & 19.54 & 2.31 \\
    & 10k & 65.88  & 6.64 & 0.036 & 0.929 & 0.833 & 0.904 & 0.951 & 18.27 & 27.73 & 4.12 \\
    & 15k & 56.08  & 6.50 & 0.030 & 0.939 & 0.869 & 0.929 & 0.974 & 20.39 & 40.99 & 7.98 \\
    & 20k & 40.15  & 6.25 & 0.015 & 0.941 & 0.903 & 0.935 & 0.952 & 20.81 & 41.14 & 8.46 \\
    & 25k & 39.27  & 5.32 & 0.018 & 0.945 & 0.902 & 0.943 & 0.967 & 21.00 & 46.20 & 10.41 \\
    & 30k & 40.50  & 5.94 & 0.018 & 0.941 & 0.904 & 0.938 & 0.959 & 20.81 & 46.53 & 9.52 \\
    & 35k & 41.96  & 5.66 & 0.020 & 0.946 & 0.905 & 0.947 & 0.983 & 19.91 & 43.45 & 7.41 \\
    & 40k & 55.82  & 6.99 & 0.033 & 0.935 & 0.910 & 0.936 & 0.948 & 18.94 & 39.77 & 6.36 \\
    & 45k & 44.39  & 5.86 & 0.025 & 0.940 & 0.913 & 0.946 & 0.963 & 20.79 & 46.75 & 8.18 \\
    & 50k & 46.24  & 6.13 & 0.027 & 0.946 & 0.914 & 0.946 & 0.967 & 21.13 & 46.42 & 8.83 \\
    & 55k & 42.09  & 6.14 & 0.022 & 0.946 & 0.912 & 0.941 & 0.966 & 21.13 & 50.40 & 10.19 \\
    & 60k & 41.62  & 6.08 & 0.024 & 0.939 & 0.914 & 0.943 & 0.964 & 20.66 & 46.58 & 9.06 \\
    \midrule
    \multirow{12}{*}{\shortstack[l]{Periodic Row-wise\\Muon}} & 5k & 159.06 & 9.04 & 0.125 & 0.892 & 0.594 & 0.799 & 0.969 & 15.61 & 18.68 & 2.05 \\
     & 10k & 72.59 & 7.20 & 0.040 & 0.929 & 0.820 & 0.888 & 0.976 & 18.98 & 28.42 & 3.85 \\
     & 15k & 52.19 & 6.35 & 0.022 & 0.929 & 0.876 & 0.917 & 0.946 & 18.25 & 35.52 & 6.20 \\
     & 20k & 39.42 & 5.61 & 0.015 & 0.949 & 0.899 & 0.948 & 0.984 & 21.05 & 42.98 & 8.16 \\
     & 25k & 44.58 & 6.27 & 0.021 & 0.935 & 0.901 & 0.931 & 0.937 & 20.87 & 38.63 & 7.19 \\
     & 30k & 39.80 & 6.15 & 0.018 & 0.951 & 0.909 & 0.943 & 0.957 & 20.23 & 39.53 & 6.64 \\
     & 35k & 41.91 & 6.32 & 0.020 & 0.941 & 0.910 & 0.941 & 0.965 & 20.83 & 40.87 & 6.95 \\
     & 40k & 43.93 & 6.64 & 0.020 & 0.938 & 0.909 & 0.942 & 0.978 & 20.47 & 43.90 & 8.09 \\
     & 45k & 62.89 & 8.68 & 0.032 & 0.933 & 0.898 & 0.942 & 1.092 & 19.52 & 43.23 & 8.13 \\
     & 50k & 52.36 & 7.30  & 0.030 & 0.936 & 0.916 & 0.942 & 0.939 & 20.54 & 49.32 & 10.86 \\
     & 55k & 43.70 & 6.42 & 0.023 & 0.941 & 0.911 & 0.950 & 0.986 & 21.00 & 49.41 & 11.44 \\
     & 60k & 42.25 & 6.35  & 0.022 & 0.941 & 0.906 & 0.952 & 0.984 & 21.34 & 49.38 & 10.58 \\
    \bottomrule
  \end{tabular}%
  }
  \vspace{2pt}
\end{table*}

\clearpage

\begin{table*}[p]
  \centering
  \caption{All checkpoint evaluation metrics for the 9B model.}
  \label{tab:all_checkpoint_metrics_7b}
  \scriptsize
  \setlength{\tabcolsep}{2.4pt}
  \renewcommand{\arraystretch}{1.02}
  \resizebox{\textwidth}{!}{%
  \begin{tabular}{@{}ll
    D{.}{.}{3.2} D{.}{.}{2.2} D{.}{.}{1.3}
    D{.}{.}{1.3} D{.}{.}{1.3} D{.}{.}{1.3} D{.}{.}{1.3}
    D{.}{.}{2.2}
    D{.}{.}{2.2} D{.}{.}{2.2}@{}}
    \toprule
    \multirow{2}{*}{Optimizer} & \multirow{2}{*}{Step} &
    \multicolumn{3}{c}{Fidelity} & \multicolumn{4}{c}{Diversity} &
    \multicolumn{1}{c}{Alignment} & \multicolumn{2}{c}{Compositionality} \\
    \cmidrule(lr){3-5}\cmidrule(lr){6-9}\cmidrule(lr){10-10}\cmidrule(lr){11-12}
    & & \multicolumn{1}{c}{FD-DINO$\downarrow$} & \multicolumn{1}{c}{FID$\downarrow$} & \multicolumn{1}{c}{MMD-DINO$\downarrow$} &
    \multicolumn{1}{c}{Precision$\uparrow$} & \multicolumn{1}{c}{Recall$\uparrow$} & \multicolumn{1}{c}{Coverage$\uparrow$} & \multicolumn{1}{c}{Density$\uparrow$} &
    \multicolumn{1}{c}{HPSv2$\uparrow$} & \multicolumn{1}{c}{GenEval2 AM$\uparrow$} & \multicolumn{1}{c}{GenEval2 GM$\uparrow$} \\
    \midrule
    \multirow{12}{*}{AdamW} & 5k & 184.83 & 8.37 & 0.150 & 0.883 & 0.531 & 0.757 & 0.943 & 16.08 & 19.18 & 1.75 \\
     & 10k & 78.05 & 6.05 & 0.043 & 0.924 & 0.794 & 0.890 & 0.991 & 19.32 & 27.01 & 3.41 \\
     & 15k & 59.30 & 7.46 & 0.028 & 0.940 & 0.857 & 0.916 & 0.966 & 20.65 & 35.76 & 6.14 \\
     & 20k & 43.20 & 6.27 & 0.016 & 0.944 & 0.881 & 0.933 & 0.967 & 20.58 & 41.28 & 8.23 \\
     & 25k & 47.67 & 6.43 & 0.019 & 0.944 & 0.879 & 0.940 & 1.020 & 20.21 & 40.96 & 8.46 \\
     & 30k & 44.42 & 6.46 & 0.021 & 0.945 & 0.895 & 0.946 & 1.012 & 20.38 & 40.93 & 8.09 \\
     & 35k & 44.72 & 6.58 & 0.022 & 0.942 & 0.902 & 0.942 & 0.979 & 21.70 & 47.67 & 10.16 \\
     & 40k & 42.21 & 6.44 & 0.021 & 0.944 & 0.905 & 0.949 & 0.975 & 20.82 & 44.33 & 9.95 \\
     & 45k & 48.43 & 7.12 & 0.025 & 0.935 & 0.902 & 0.941 & 0.970 & 21.18 & 44.51 & 9.40 \\
     & 50k & 45.54 & 6.77 & 0.025 & 0.942 & 0.905 & 0.942 & 0.958 & 20.54 & 42.45 & 8.89 \\
     & 55k & 41.01 & 6.18 & 0.021 & 0.946 & 0.911 & 0.942 & 0.957 & 21.18 & 45.62 & 10.33 \\
     & 60k & 41.26 & 6.19 & 0.022 & 0.946 & 0.904 & 0.950 & 0.987 & 21.39 & 45.93 & 9.66 \\
    \midrule
    \multirow{12}{*}{Muon} & 5k & 117.21 & 6.48 & 0.085 & 0.905 & 0.706 & 0.836 & 0.937 & 16.82 & 22.76 & 2.21 \\
     & 10k & 47.64 & 5.51 & 0.019 & 0.942 & 0.870 & 0.924 & 0.981 & 20.39 & 37.70 & 8.33 \\
     & 15k & 40.00 & 5.77 & 0.017 & 0.937 & 0.897 & 0.938 & 0.958 & 21.97 & 46.52 & 11.82 \\
     & 20k & 38.57 & 6.46 & 0.018 & 0.947 & 0.903 & 0.942 & 0.952 & 21.93 & 53.00 & 14.17 \\
     & 25k & 38.97 & 5.95 & 0.018 & 0.943 & 0.911 & 0.943 & 0.974 & 20.49 & 49.81 & 12.02 \\
     & 30k & 39.07 & 6.13 & 0.021 & 0.945 & 0.911 & 0.945 & 0.952 & 21.38 & 50.16 & 10.04 \\
     & 35k & 39.30 & 5.99 & 0.020 & 0.950 & 0.910 & 0.941 & 0.966 & 22.27 & 54.65 & 14.14 \\
     & 40k & 35.92 & 5.79 & 0.019 & 0.944 & 0.915 & 0.944 & 0.951 & 22.03 & 47.49 & 10.30 \\
     & 45k & 36.29 & 5.78 & 0.020 & 0.950 & 0.918 & 0.949 & 0.967 & 21.98 & 52.21 & 11.92 \\
     & 50k & 34.53 & 5.71 & 0.020 & 0.948 & 0.922 & 0.947 & 0.947 & 22.87 & 53.98 & 12.86 \\
     & 55k & 34.41 & 5.64 & 0.020 & 0.950 & 0.920 & 0.949 & 0.964 & 22.36 & 51.55 & 11.34 \\
     & 60k & 33.97 & 5.61 & 0.020 & 0.952 & 0.927 & 0.953 & 0.960 & 22.60 & 51.77 & 11.35 \\
    \midrule
    \multirow{12}{*}{\shortstack[l]{Periodic Row-wise\\Muon}} & 5k & 143.77 & 8.71 & 0.108 & 0.903 & 0.630 & 0.824 & 0.960 & 15.83 & 19.38 & 2.29 \\
     & 10k & 60.29 & 6.70 & 0.028 & 0.923 & 0.850 & 0.914 & 0.946 & 18.54 & 29.53 & 4.36 \\
     & 15k & 48.17 & 6.25 & 0.021 & 0.937 & 0.884 & 0.931 & 0.945 & 20.46 & 40.09 & 8.10 \\
     & 20k & 38.14 & 5.72 & 0.014 & 0.947 & 0.896 & 0.935 & 0.946 & 22.12 & 45.39 & 9.82 \\
     & 25k & 32.44 & 5.63 & 0.011 & 0.943 & 0.907 & 0.947 & 0.954 & 21.83 & 51.25 & 11.75 \\
     & 30k & 38.81 & 6.17 & 0.017 & 0.950 & 0.904 & 0.945 & 0.970 & 20.95 & 49.46 & 10.28 \\
     & 35k & 34.57 & 5.63 & 0.016 & 0.952 & 0.918 & 0.948 & 0.979 & 22.08 & 50.73 & 10.89 \\
     & 40k & 37.47 & 5.89 & 0.019 & 0.950 & 0.911 & 0.951 & 0.964 & 22.34 & 54.97 & 13.50 \\
     & 45k & 36.86 & 6.09 & 0.020 & 0.946 & 0.916 & 0.946 & 0.965 & 22.82 & 57.35 & 15.35 \\
     & 50k & 35.37 & 5.85 & 0.019 & 0.952 & 0.912 & 0.952 & 0.972 & 22.57 & 56.16 & 13.44 \\
     & 55k & 38.19 & 6.39 & 0.020 & 0.943 & 0.921 & 0.954 & 0.967 & 22.47 & 55.52 & 12.18 \\
     & 60k & 36.70 & 6.13 & 0.019 & 0.949 & 0.913 & 0.951 & 0.979 & 22.44 & 57.33 & 15.97 \\
    \bottomrule
  \end{tabular}%
  }
  \vspace{2pt}
\end{table*}

\clearpage

\begin{table*}[]
  \centering
  \caption{All checkpoint evaluation metrics for the 15B model.}
  \label{tab:all_checkpoint_metrics_10b}
  \scriptsize
  \setlength{\tabcolsep}{2.4pt}
  \renewcommand{\arraystretch}{1.02}
  \resizebox{\textwidth}{!}{%
  \begin{tabular}{@{}ll
    D{.}{.}{3.2} D{.}{.}{2.2} D{.}{.}{1.3}
    D{.}{.}{1.3} D{.}{.}{1.3} D{.}{.}{1.3} D{.}{.}{1.3}
    D{.}{.}{2.2}
    D{.}{.}{2.2} D{.}{.}{2.2}@{}}
    \toprule
    \multirow{2}{*}{Optimizer} & \multirow{2}{*}{Step} &
    \multicolumn{3}{c}{Fidelity} & \multicolumn{4}{c}{Diversity} &
    \multicolumn{1}{c}{Alignment} & \multicolumn{2}{c}{Compositionality} \\
    \cmidrule(lr){3-5}\cmidrule(lr){6-9}\cmidrule(lr){10-10}\cmidrule(lr){11-12}
    & & \multicolumn{1}{c}{FD-DINO$\downarrow$} & \multicolumn{1}{c}{FID$\downarrow$} & \multicolumn{1}{c}{MMD-DINO$\downarrow$} &
    \multicolumn{1}{c}{Precision$\uparrow$} & \multicolumn{1}{c}{Recall$\uparrow$} & \multicolumn{1}{c}{Coverage$\uparrow$} & \multicolumn{1}{c}{Density$\uparrow$} &
    \multicolumn{1}{c}{HPSv2$\uparrow$} & \multicolumn{1}{c}{GenEval2 AM$\uparrow$} & \multicolumn{1}{c}{GenEval2 GM$\uparrow$} \\
    \midrule
    \multirow{12}{*}{AdamW} & 5k & 175.31 & 8.92 & 0.136 & 0.879 & 0.561 & 0.753 & 0.903 & 16.03 & 16.81 & 1.50 \\
     & 10k & 74.84 & 7.70 & 0.035 & 0.927 & 0.805 & 0.895 & 0.970 & 19.01 & 25.75 & 3.08 \\
     & 15k & 53.21 & 6.42 & 0.023 & 0.939 & 0.856 & 0.920 & 0.970 & 20.82 & 32.86 & 5.35 \\
     & 20k & 56.21 & 7.03 & 0.028 & 0.935 & 0.870 & 0.925 & 0.987 & 18.66 & 35.58 & 5.70 \\
     & 25k & 40.75 & 5.86 & 0.017 & 0.943 & 0.894 & 0.942 & 0.985 & 21.99 & 40.27 & 7.44 \\
     & 30k & 39.59 & 6.15 & 0.017 & 0.947 & 0.890 & 0.942 & 0.999 & 21.98 & 42.03 & 7.87 \\
     & 35k & 38.46 & 5.86 & 0.017 & 0.947 & 0.894 & 0.949 & 1.007 & 21.30 & 45.31 & 9.43 \\
     & 40k & 41.33 & 6.37 & 0.022 & 0.932 & 0.910 & 0.940 & 0.937 & 22.13 & 47.10 & 11.11 \\
     & 45k & 38.17 & 6.00 & 0.018 & 0.949 & 0.906 & 0.952 & 1.006 & 21.72 & 46.77 & 8.96 \\
     & 50k & 40.48 & 6.25 & 0.021 & 0.950 & 0.914 & 0.950 & 0.981 & 22.18 & 47.49 & 8.77 \\
     & 55k & 40.13 & 6.28 & 0.021 & 0.951 & 0.905 & 0.954 & 0.975 & 22.21 & 47.21 & 9.55 \\
     & 60k & 40.23 & 6.24 & 0.022 & 0.945 & 0.909 & 0.947 & 0.964 & 21.99 & 47.34 & 9.18 \\
    \midrule
    \multirow{12}{*}{Muon} & 5k & 119.42 & 7.83 & 0.088 & 0.903 & 0.694 & 0.846 & 0.923 & 17.16 & 21.96 & 2.26 \\
     & 10k & 54.49 & 6.35 & 0.025 & 0.936 & 0.865 & 0.928 & 0.965 & 20.23 & 33.89 & 5.72 \\
     & 15k & 37.89 & 5.82 & 0.015 & 0.942 & 0.898 & 0.941 & 0.951 & 22.10 & 43.35 & 8.73 \\
     & 20k & 37.24 & 5.41 & 0.016 & 0.944 & 0.909 & 0.940 & 0.947 & 21.96 & 46.38 & 9.81 \\
     & 25k & 33.48 & 5.15 & 0.014 & 0.947 & 0.914 & 0.945 & 0.952 & 22.92 & 54.30 & 14.21 \\
     & 30k & 37.13 & 5.84 & 0.018 & 0.942 & 0.915 & 0.947 & 0.937 & 20.93 & 52.72 & 11.44 \\
     & 35k & 35.92 & 5.92 & 0.018 & 0.943 & 0.917 & 0.950 & 0.961 & 20.87 & 52.67 & 10.67 \\
     & 40k & 33.55 & 5.37 & 0.018 & 0.952 & 0.924 & 0.954 & 0.952 & 22.88 & 54.66 & 12.15 \\
     & 45k & 33.25 & 5.19 & 0.018 & 0.945 & 0.921 & 0.950 & 0.948 & 22.53 & 57.50 & 13.76 \\
     & 50k & 34.54 & 5.74 & 0.020 & 0.941 & 0.925 & 0.953 & 0.934 & 22.67 & 58.18 & 14.11 \\
     & 55k & 33.47 & 5.62 & 0.019 & 0.946 & 0.921 & 0.949 & 0.949 & 22.43 & 58.53 & 13.84 \\
     & 60k & 33.51 & 5.57 & 0.019 & 0.945 & 0.930 & 0.950 & 0.937 & 22.37 & 57.93 & 14.55 \\
    \midrule
    \multirow{12}{*}{\shortstack[l]{Periodic Row-wise\\Muon}} & 5k & 137.51 & 7.68 & 0.101 & 0.899 & 0.641 & 0.817 & 0.917 & 17.49 & 19.15 & 2.61 \\
     & 10k & 56.18 & 6.37 & 0.022 & 0.936 & 0.851 & 0.918 & 0.971 & 19.37 & 26.87 & 3.44 \\
     & 15k & 45.14 & 6.64 & 0.019 & 0.939 & 0.885 & 0.938 & 0.956 & 21.86 & 34.76 & 5.73 \\
     & 20k & 40.13 & 6.34 & 0.017 & 0.941 & 0.901 & 0.939 & 0.963 & 21.81 & 39.12 & 6.93 \\
     & 25k & 32.35 & 5.61 & 0.012 & 0.949 & 0.912 & 0.947 & 0.966 & 22.90 & 49.68 & 11.74 \\
     & 30k & 35.12 & 5.58 & 0.016 & 0.949 & 0.907 & 0.953 & 0.994 & 20.59 & 47.55 & 10.70 \\
     & 35k & 36.91 & 6.19 & 0.017 & 0.944 & 0.914 & 0.943 & 0.959 & 19.94 & 48.71 & 9.45 \\
     & 40k & 38.72 & 6.02 & 0.021 & 0.943 & 0.916 & 0.945 & 0.942 & 20.43 & 45.89 & 9.19 \\
     & 45k & 34.37 & 5.60 & 0.018 & 0.941 & 0.919 & 0.951 & 0.954 & 21.08 & 53.10 & 12.08 \\
     & 50k & 52.25 & 7.86 & 0.039 & 0.944 & 0.916 & 0.941 & 0.923 & 19.64 & 44.64 & 8.98 \\
     & 55k & 40.25 & 6.38 & 0.026 & 0.942 & 0.915 & 0.949 & 0.945 & 20.80 & 47.80 & 10.12 \\
     & 60k & 33.99 & 5.63 & 0.020 & 0.948 & 0.925 & 0.955 & 0.967 & 21.35 & 52.26 & 11.97 \\
    \bottomrule
  \end{tabular}%
  }
  \vspace{2pt}
\end{table*}

\begin{figure*}[]
    \centering

    \begin{subfigure}[t]{\textwidth}
        \centering
        \includegraphics[width=0.82\textwidth]
            {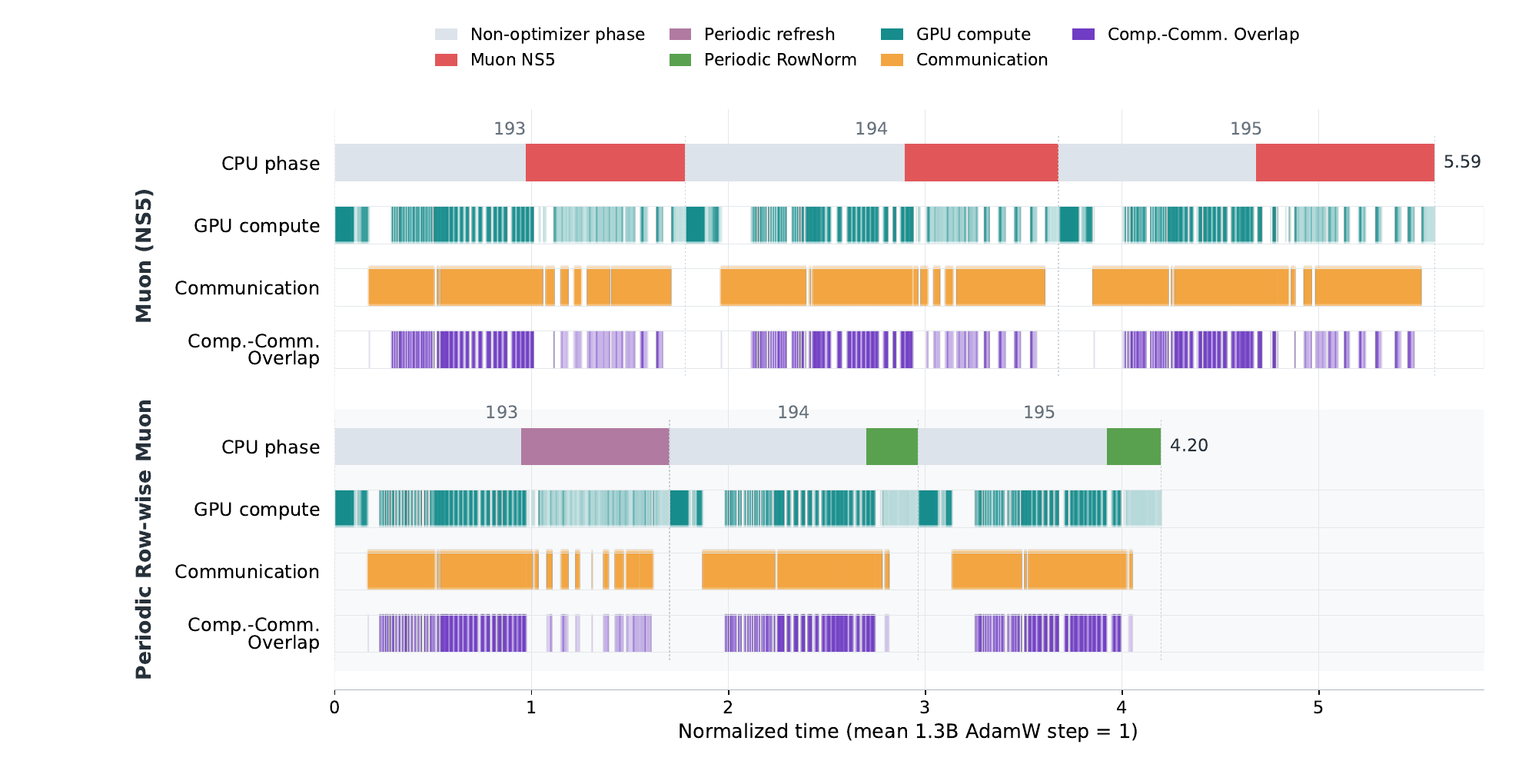}
        \caption{1.3B model.}
        \label{fig:profiler_1p3b}
    \end{subfigure}

    \vspace{0.1em}

    \begin{subfigure}[t]{\textwidth}
        \centering
        \includegraphics[width=0.82\textwidth]
            {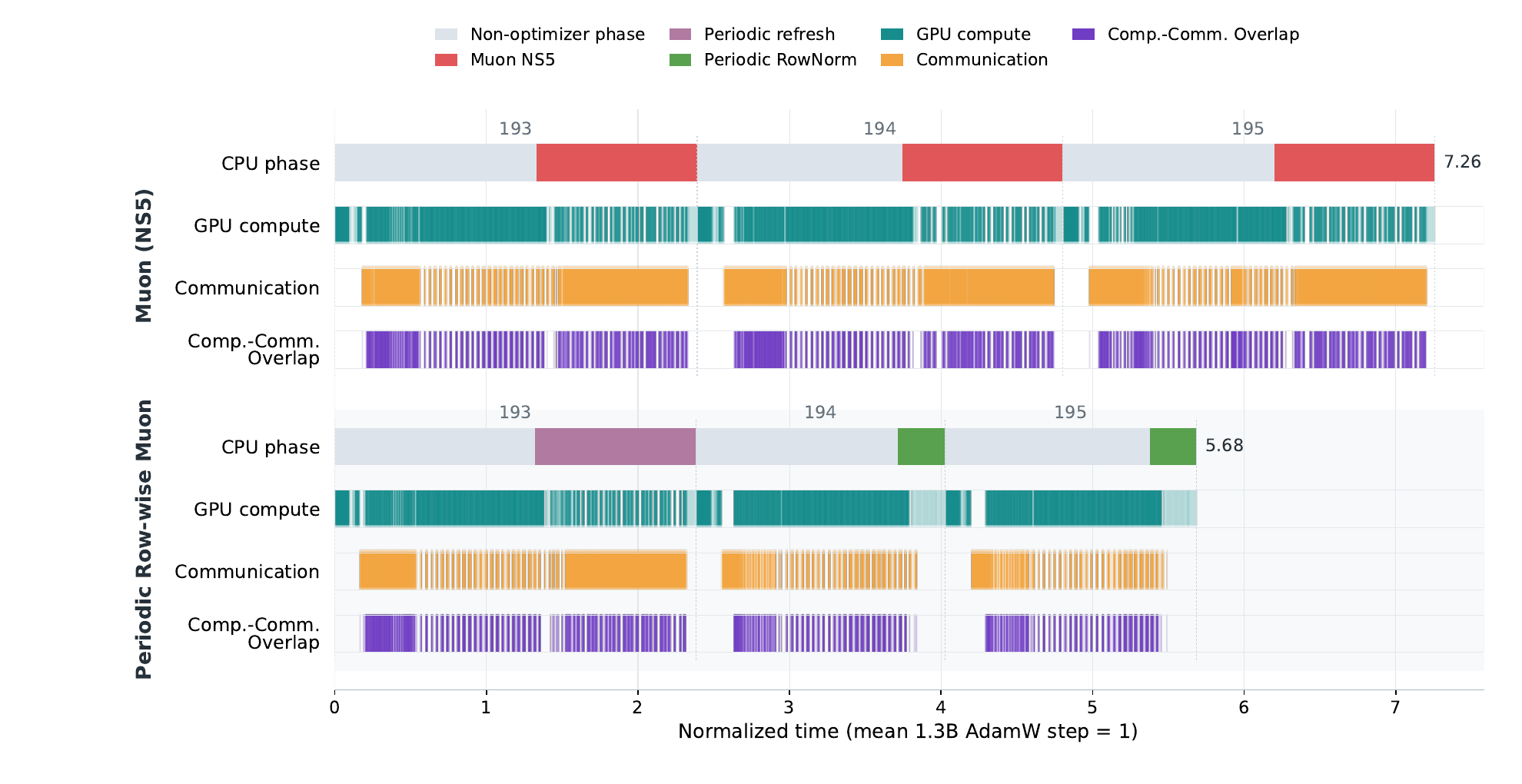}
        \caption{4B model.}
        \label{fig:profiler_4b}
    \end{subfigure}

    \vspace{0.1em}

    \begin{subfigure}[t]{\textwidth}
        \centering
        \includegraphics[width=0.82\textwidth]
            {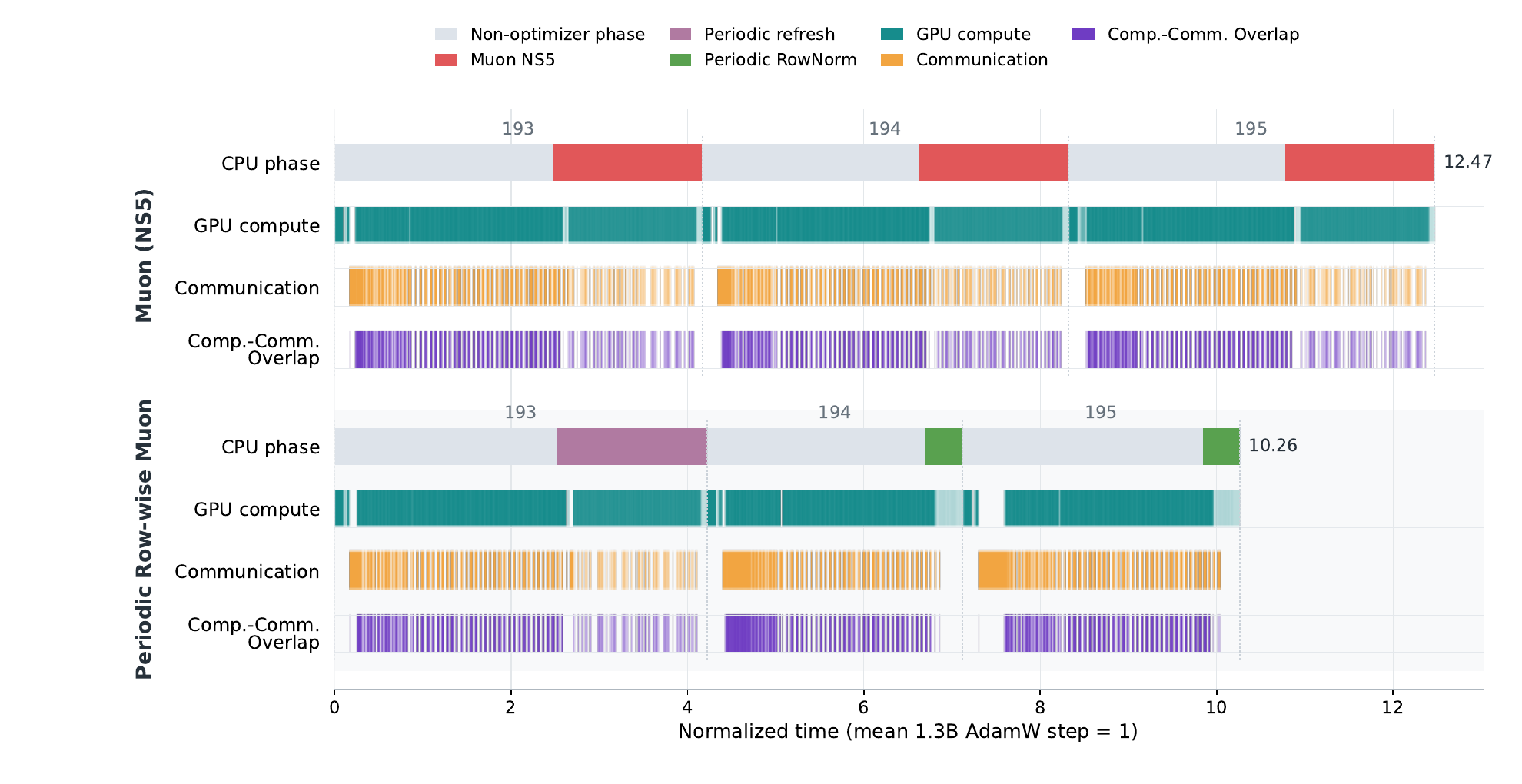}
        \caption{9B model.}
        \label{fig:profiler_9b}
    \end{subfigure}

    \caption{
        Simplified profiler traces for the 1.3B, 4B, and 9B models.
        The corresponding 15B result is shown in
        Figure \ref{fig:profiler_15b}.
    }
    \label{fig:profiler_additional}
\end{figure*}

\end{document}